\definecolor{compblue}{rgb}{0,0.45,1}
\definecolor{darkblue}{rgb}{0,0,0.55}
\theoremstyle{plain}
\newtheorem{theorem}{Theorem}[section]
\declaretheorem[name=Proposition, numberlike=theorem]{proposition}
\newtheorem{lemma}[theorem]{Lemma}
\theoremstyle{definition}
\newtheorem{definition}[theorem]{Definition}
\newtheorem{assumption}[theorem]{Assumption}
\theoremstyle{remark}
\crefname{assumption}{Assumption}{Assumptions}
\author[1]{\textbf{Samuel Lippl}}
\author[1]{\textbf{L.F. Abbott}}
\author[1,2]{\textbf{SueYeon Chung}}
\affil[1]{Department of Neuroscience, Zuckerman Mind Brain Behavior Institute, Columbia University, New York, NY}
\affil[2]{Center for Computational Neuroscience, Flatiron Institute, New York, NY}
\affil[ ]{\texttt{\{sl4742,lfabbott\}@columbia.edu,schung@flatironinstitute.org}}
\title{The Implicit Bias of Gradient Descent on\\Generalized Gated Linear Networks}
\begin{document}
\maketitle

\begin{abstract}
	Understanding the asymptotic behavior of gradient-descent training of deep neural networks is essential for revealing inductive biases and improving network performance.
We derive the infinite-time training limit of a mathematically tractable class of deep nonlinear neural networks, gated linear networks (GLNs), and generalize these results to gated networks described by general homogeneous polynomials.
We study the implications of our results, focusing first on two-layer GLNs.
We then apply our theoretical predictions to GLNs trained on MNIST and show how architectural constraints and the implicit bias of gradient descent affect performance.
Finally, we show that our theory captures a substantial portion of the inductive bias of ReLU networks.
By making the inductive bias explicit, our framework is poised to inform the development of more efficient, biologically plausible, and robust learning algorithms.
\end{abstract}

\section{Introduction}

Even after a task has been learned perfectly over a training set, learning typically continues to modify network parameters, often indefinitely.
Understanding the form of the infinite-time, asymptotic solution has important implications for understanding the inductive bias of the network and improving its learning algorithm.
For linear networks, $y=\langle\beta,x\rangle$, \citet{soudry_implicit_2018} have shown that gradient descent for most common losses used for classification asymptotically approaches a fixed margin classifier $\beta$ with minimum $L_2$ norm (or equivalently a maximum margin classifier with fixed norm, that is, an SVM).
\citet{gunasekar_implicit_2018} extended this analysis to deep linear networks, showing that densely connected linear networks converge to the SVM solution regardless of their depth.

The results of \citet{soudry_implicit_2018} and \citet{gunasekar_implicit_2018} have been generalized to homogeneous nonlinear predictors by \citet{nacson_lexicographic_2019} and \citet{lyu_gradient_2020}.
They demonstrate that gradient descent converges to a fixed margin classifier that minimizes the $L_2$ norm over all weights parameterizing the network.
It remains unclear, however, what effect this penalty on the weights has on the inductive bias of the network they parameterize.

To answer this question, we extend the approach by \citet{gunasekar_implicit_2018} to a nonlinear class of networks, in which $\beta$ is a set of different predictors and the linear predictor for a specific input is selected by a context system.
We assume that $\beta$ is constructed from homogeneous polynomials of a global pool of weights $w$ and that the network learns by gradient descent on $w$.
Two interesting examples of such networks are gated linear networks \citep[GLNs;][]{veness_online_2017} and what we call frozen-gate ReLU networks (FReLUs).
Because we focus on families of gated predictors, our approach covers certain models that escape the analysis by \citet{nacson_lexicographic_2019} and \citet{lyu_gradient_2020}.

In this paper, we present several key findings.
\begin{itemize}
    \item We derive and prove that gradient descent in GLNs asymptotically constructs a fixed margin solution with minimum norm on the collection of context-dependent linear predictors $\beta$, but with two important modifications: because all predictors are constructed from a shared set of weights, the norm of these vectors is minimized subject to certain equivariance constraints. For the same reason, gradient descent operates on a set of weights that are activated for multiple contexts and, as a consequence, the norm that $\beta$ minimizes is different from the usual $L_2$ norm of an SVM.
    \item Unlike prior work, we use the asymptotic limit of gradient descent to directly train SVMs that share the GLNs' inductive bias. These models allow us to separately study how the equivariance constraints and gradient descent's implicit bias affect generalization.
    \item We then analyse the implicit bias of these GLNs in detail. Our analysis highlights that gradient descent (without any explicit regularization) incentivizes higher similarity between predictors that share part of their context and that this improves the GLN's performance.
    \item Finally, we investigate the asymptotic learning behavior of ReLU networks by applying a similar approach to FReLUs. We show that ReLU networks outperform GLNs in part because they can modify their context throughout learning, whereas GLNs cannot.
\end{itemize}

Taken together, our results have implications for understanding learning in deep neural networks and draw a connection between deep learning and SVMs that may inspire new and improved learning algorithms.

\section{Gated Linear Networks}
\label{sec:glns}

GLNs, like other deep neural networks, process their input through multiple layers of hidden units to compute their output.
However, while conventional deep neural networks attain expressivity through nonlinearities, a hidden unit in a GLN computes its output without any nonlinearity.
Instead, it attains expressivity by using different weights for different regions of the input space.\footnote{\citet{veness_online_2017} motivate GLNs through opinion pooling. We omit this motivation in favor of a simpler (but equivalent) description of the architecture.
The motivation through opinion pooling also meant that they required their input to be scaled to $(0,1)$, which usually amounted to a squashed version of the input \citep{veness_gated_2021}.
Since opinion pooling subsequently expands these probabilities using the inverse sigmoid, this does not tend to be very different from the way we set up our GLNs.}

Consider an example: a two-dimensional input (\cref{fig:gln-example}d) with two units in the first hidden layer (\cref{fig:gln-example}a).
The first unit may use one set of weights to compute its output for $x_1>0$, and a different set of weights to compute its output for $x_1\leq 0$ (as reflected by the two regions in \cref{fig:gln-example}b and the two hyperplanes in \cref{fig:gln-example}e).
Similarly, the second unit may choose its set of weights differently for $x_2>0$ and $x_2\leq 0$ (\cref{fig:gln-example}c,f).
As we combine multiple such hidden units, the GLN becomes increasingly expressive.
For example, if a single unit now reads out the two hidden units, this readout will have learned a different linear predictor for all four quadrants (as reflected by the four hyperplanes in \cref{fig:gln-example}g).


More generally, a GLN learns a different weight vector, $\beta_{\gamma}$, for each global context $\gamma$.
Most inputs will not share this global context, but will have the same \emph{local} context for particular hidden units.
This means that the GLN uses, and therefore updates, overlapping sets of weights.
For example, the bottom-right and top-right quadrant in \cref{fig:gln-example} share the local context of the first hidden unit and therefore use and update the same weights for this unit.
(Moreover, all units in this example use the same set of weights for the second layer.)
Compared to a shallow GLN using the same partitioning of the input space (which will activate a non-overlapping set of weights for each context), we will see that a deep GLN's linear predictors are equivariant with respect to the local contexts they share, imposing architectural constraints.
These constraints change the network's inductive bias by reducing the search space.
In addition, we will see that this equivariance changes the implicit bias of gradient descent.

\begin{figure*}[t]
\vskip 0.2in
\begin{center}
\centerline{\includegraphics[width=\textwidth]{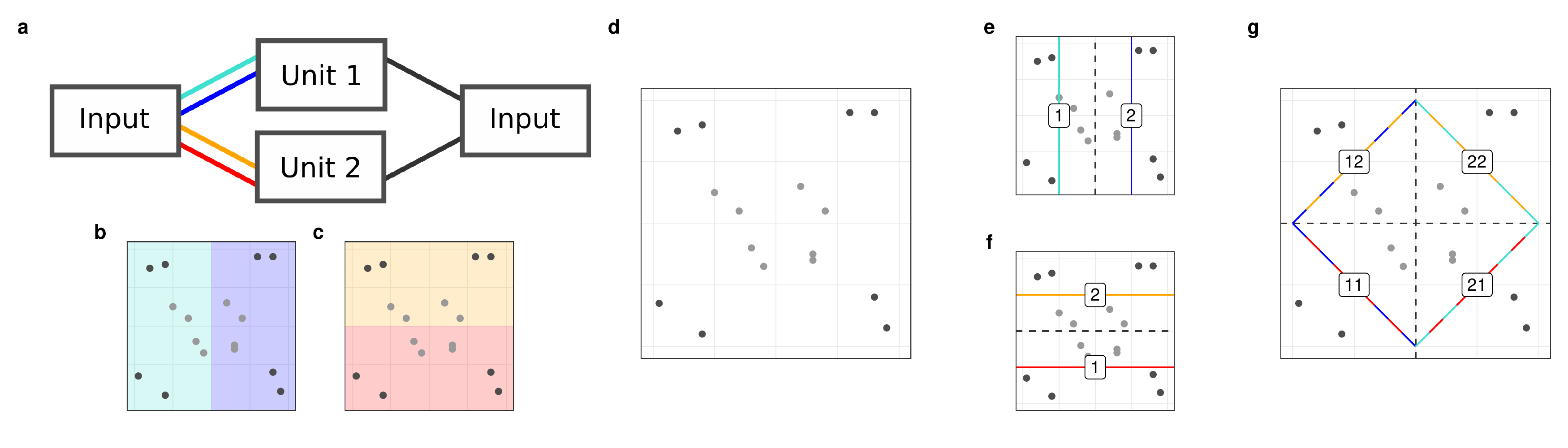}}
\caption{Example GLN. \textbf{a} Sketch of the GLN's structure. Every line represents a weight and units are linear. The colors indicate different weights, which are chosen depending on the region in which the input is located, as illustrated by panels \textbf{b} and \textbf{c}. For simplicity, we only use context-dependent weights for inputs to (not outputs from) the hidden layer. \textbf{d} The input data with labels indicated by shading. \textbf{e}, \textbf{f} The two hidden units each learn two weight vectors (as represented by the resulting hyperplanes). The weight is chosen depending on the context. \textbf{g} Output function. The output combines both hidden units and therefore has  weights in four different regions. Each weight (as represented by the multicolored hyperplanes) is composed from two of the hidden units' linear weights.}
\label{fig:gln-example}
\end{center}
\vskip -0.2in
\end{figure*}

This means that GLNs allow us to ask a fundamental question about gradient descent in deep networks: how do local changes in the weights affect the inductive bias of the global network parameterized by these weights?
GLNs are conventionally trained using a local learning rule, where every hidden unit attempts to predict the output.
However, we are interested in them specifically because their particular parameterization allows us to exactly characterize their asymptotic behavior under gradient descent.

\section{Exact Asymptotic Behavior of Learning in Gated Linear Networks}

\subsection{Background: Learning in Linear Networks}

To characterize the asymptotic behavior of learning in GLNs, we first turn to the asymptotic behavior of learning in linear networks, as characterized by \citet{soudry_implicit_2018} and \citet{gunasekar_implicit_2018}.
\citeauthor{soudry_implicit_2018} were concerned with gradient descent on a linearly separable dataset $(x^{(n)},y^{(n)})$ using the \emph{exponential loss} $\exp(-y\langle\beta,x\rangle)$, or a loss that has a similar tail\footnote{We call this class of loss functions \emph{exponential-like}, see \cref{def:exponential-like}.} such as the cross-entropy $\ln(1 +\exp(-y\langle\beta,x\rangle))$.

If, through gradient descent, the overall loss approaches zero, the linear predictor's norm $\|\beta^{(t)}\|_2$ must diverge. Thus, all of the theorems we discuss (including ours) refer to the asymptotic direction that the weight vector converges to (provided it converges to a fixed direction).  In particular, if we define $\hat{\beta}$ as the unit vector pointing in the same direction as the asymptotically diverging vector $\beta^{(\infty)}$, \citet{soudry_implicit_2018} show that $\hat{\beta}$ is a maximum margin predictor, or equivalently, that it is proportional to the solution of the optimization problem
\begin{equation}
\label{eq:opt-soudry}
    \min\|\beta\|_2,\quad\text{s.t. }y^{(n)}\langle\beta,x^{(n)}\rangle\geq1.
\end{equation}
Using the Karush-Kuhn-Tucker (KKT) conditions \citep{karush_minima_1939, kuhn_nonlinear_1951}, this minimizing vector can be written as
\begin{equation}
\label{eq:stat-soudry}
   \hat{\beta}=\sum_{n\in S}\lambda_ny^{(n)}x^{(n)},\quad
    \lambda_n\geq 0.
\end{equation}
Here, $S$ is the set of data points for which the margin inequality is tight, i.e. $S=\left\{n|y^{(n)}\langle\beta,x^{(n)}\rangle=1\right\}$. These are the \emph{support vectors}.

\citet{gunasekar_implicit_2018} extend this result to deep linear networks for which the output can be written as $\langle\mathcal{P}(w),x\rangle$, where $\mathcal{P}$ is a polynomial mapping the weights $w\in\mathbb{R}^P$ onto a linear predictor $\beta\in\mathbb{R}^D$.
$P$ and $D$ are the number of weights and the input dimension, respectively.
For example, a densely connected linear network with two layers has $f_w(x)=w_2^Tw_1x$, so $\mathcal{P}(w)=w_2^Tw_1$.
They require that $\mathcal{P}$ is homogeneous, that is, $\mathcal{P}(\alpha w)=\alpha^{\nu}\mathcal{P}(w)$, where $\nu$ is the \emph{degree} of $\mathcal{P}$ (this excludes skip connections and bias units). For the example above, $\nu=2$.
They prove that if $w^{(t)}$ converges in direction to $\hat{w}$, $\hat{w}$ is proportional to a solution of the optimization problem
\begin{equation}
\label{eq:opt-gunasekar}
    \min\|w\|_2,\quad\text{s.t. }y^{(n)}\langle\mathcal{P}(w),x^{(n)}\rangle\geq1.
\end{equation}
Whereas (\ref{eq:opt-soudry}) directly minimizes the norm of the linear predictor, this problem penalizes the overall norm of internal weights that parameterize that predictor.
The fixed margin constraint, however, still operates on the linear predictor $\mathcal{P}(w)$.

In contrast to the linear predictor, $\hat{w}$ is not necessarily a global minimum of (\ref{eq:opt-gunasekar}).
Instead, stationarity is akin to a local minimum in the context of minimizing a (potentially nonconvex) objective function, but additionally takes into account the margin constraints.
More specifically, stationarity requires that
\begin{equation}
\label{eq:stat-gunasekar}
    \hat{w} = \nabla_w\mathcal{P}(w)\sum_{n\in S}\lambda_ny_nx_n,\quad
    \lambda_n\geq 0,
\end{equation}
so the weights are still constructed from a nonnegative sum of support vectors.
However, these support vectors must be projected from the input space $\mathbb{R}^D$ to the weight space $\mathbb{R}^P$.
This is achieved by the polynomial's Jacobian $\nabla_w\mathcal{P}(w)\in\mathbb{R}^{P\times D}$.

\subsection{Extension to Generalized GLNs}

In contrast to linear predictors and multi-layer linear networks, which are characterized by a single linear predictor, GLNs are characterized by a different weight vector $\beta_{\gamma}$ for each global context $\gamma$. Each $\beta_{\gamma}$ is given by a polynomial function of the weights $w$, and, if we leave out bias units beyond the first layer, this polynomial is homogeneous. Thus, it may appear that we can extend the analysis by \citet{gunasekar_implicit_2018} simply by applying their theorem to each context-specific predictor $\beta_{\gamma}$. However, predictors for different contexts are parameterized by an overlapping set of weights, so this is not possible. This highlights the critical question we pose in our analysis: \emph{how do the shared weights between different linear predictors affect the inductive bias of gradient descent?}

Motivated by these considerations, we extend the previous analysis by considering a set of (global) contexts $\gamma\in\Gamma$, where each context uses a different homogeneous polynomial $\mathcal{P}_{\gamma}$ to connect the global pool of weights to the context-specific linear predictor.
This means that our model
\begin{equation}
    f_w(x;\gamma):=\langle\mathcal{P}_{\gamma}(w),x\rangle,
\end{equation}
depends on both the input $x$ and the context $\gamma$.
We call this class of functions \emph{Generalized Gated Linear Networks} and, in particular, it covers GLNs (without bias units).

We are then able to prove the following (see \cref{app:proof-th} for assumptions and proof):
\begin{restatable}{theorem}{ghpt}
\label{th}
Consider a dataset $(x^{(n)},y^{(n)},\gamma^{(n)})$, where $x^{(n)}\in\mathbb{R}^D$ is the input, $y^{(n)}=\{-1,1\}$ is the label, and $\gamma^{(n)}$ is a global context. Then if $w^{(t)}$ converges to a fixed direction and the loss approaches zero, the limiting direction $\hat{w}$ is proportional to a stationary point of
\begin{equation}
\label{eq:th}
    \min\|w\|_2^2,\quad\text{s.t. }y^{(n)}f_w(x^{(n)};\gamma^{(n)})\geq1.
\end{equation}
\end{restatable}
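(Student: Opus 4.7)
The plan is to reduce \cref{th} to the theorem of \citet{gunasekar_implicit_2018} by repackaging the context-indexed family of predictors into a single homogeneous polynomial, whose inner product with a context-augmented input reproduces $f_w(x;\gamma)$. Once this repackaging is in place, the stationarity conclusion transfers essentially verbatim.

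First, I would enumerate the finitely many global contexts appearing in the training data as $\gamma_1,\dots,\gamma_K$ and define a stacked polynomial $\tilde{\mathcal{P}}\colon\mathbb{R}^P\to\mathbb{R}^{DK}$ by concatenating $\mathcal{P}_{\gamma_1}(w),\dots,\mathcal{P}_{\gamma_K}(w)$. Because every $\mathcal{P}_{\gamma}$ is homogeneous of the same degree $\nu$, so is $\tilde{\mathcal{P}}$. For each training example I would lift the input to $\tilde{x}^{(n)}\in\mathbb{R}^{DK}$, placing $x^{(n)}$ in the block indexed by the realized context $\gamma^{(n)}$ and zeros elsewhere, so that
\begin{equation*}
\bigl\langle\tilde{\mathcal{P}}(w),\tilde{x}^{(n)}\bigr\rangle = \bigl\langle\mathcal{P}_{\gamma^{(n)}}(w),x^{(n)}\bigr\rangle = f_w\bigl(x^{(n)};\gamma^{(n)}\bigr).
\end{equation*}
The loss, the gradient-descent trajectory of $w^{(t)}$, and the margin constraints are all preserved under this reparameterization.

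Second, I would invoke the theorem of \citet{gunasekar_implicit_2018} on the reformulated problem. Under the hypotheses that $w^{(t)}$ converges in direction to $\hat{w}$ and the exponential-like loss tends to zero, their result asserts that $\hat{w}$ is proportional to a stationary point of $\min\|w\|_2^2$ subject to $y^{(n)}\langle\tilde{\mathcal{P}}(w),\tilde{x}^{(n)}\rangle\geq 1$. Translating the constraints back through the identity above yields exactly the problem in \cref{eq:th}, and the corresponding KKT condition becomes $\hat{w}=\sum_{n\in S}\lambda_n y^{(n)}\nabla_w f_w(x^{(n)};\gamma^{(n)})\bigr|_{\hat{w}}$ with $\lambda_n\geq 0$, which is the natural analogue of \cref{eq:stat-gunasekar}.

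The main obstacle will be auditing the technical preconditions of \citet{gunasekar_implicit_2018}: their proof requires smoothness and common homogeneity of the polynomial map, an exponential-like tail on $\ell$, and a regularity condition at the limit point. Homogeneity and smoothness are immediate for $\tilde{\mathcal{P}}$, but I would need to confirm that treating the context $\gamma^{(n)}$ as a fixed per-example label is admissible, even though in GLNs the local gates are in general computed from $x$. The point is that the asymptotic gradient-descent analysis only ever sees the realized training pairs $(x^{(n)},\gamma^{(n)})$, so freezing $\gamma^{(n)}$ and treating $\tilde{\mathcal{P}}$ as a standard homogeneous polynomial predictor is legitimate; articulating this carefully, and stating any assumptions on context realizability and on the gradient-flow convergence, is where the bulk of the formal work in the appendix would sit.
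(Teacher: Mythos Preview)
Your reduction is a genuinely different route from the paper's. The paper does \emph{not} stack the $\mathcal{P}_\gamma$ into a single predictor and invoke \citet{gunasekar_implicit_2018} as a black box; instead it reopens their argument, decomposing the loss and the gradient context-by-context, applying the support-vector lemma (Lemma~8 of \citet{gunasekar_characterizing_2018}, extended in \cref{app:extension-gunasekar} to exponential-like losses) separately to each $\mathcal{L}_\gamma$, and then recombining via a convergent-subsequence argument for the relative weights $\hat{k}_\gamma^{(t)}$. Your stacking trick is shorter and buys you the KKT form of \cref{eq:th-stat} essentially for free, since the Jacobian of $\tilde{\mathcal{P}}$ is block-diagonal in exactly the right way.

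There is one real gap. You assert that ``every $\mathcal{P}_\gamma$ is homogeneous of the same degree $\nu$,'' but the paper's setup and proof explicitly allow the degrees $\nu_\gamma$ to differ across contexts (see the primal-feasibility step in \cref{app:proof-th}, where the scaling $\alpha=\max_\gamma m_\gamma^{-1/\nu_\gamma}$ is chosen precisely to accommodate this). If the degrees differ, your stacked $\tilde{\mathcal{P}}$ is no longer homogeneous and the reduction to \citet{gunasekar_implicit_2018} fails outright. For the two applications the paper actually studies (two-layer GLNs and FReLUs) all $\nu_\gamma=2$, so your argument covers everything used downstream; but as a proof of \cref{th} in its stated generality it is incomplete. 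A second, smaller point: the regularity hypothesis you inherit from \citet{gunasekar_implicit_2018} is directional convergence of the \emph{stacked} gradient $\nabla_{\tilde\beta}\tilde{\mathcal L}$, whereas the paper assumes the per-context gradients $\nabla_\beta\mathcal{L}_\gamma$ each converge in direction (\cref{ass:beta-gradient}); these are not identical assumptions, so you should state precisely which one you are imposing.
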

Here and throughout the article, $\|w\|_2^2$ denotes the sum of the squares of all the elements in all of the weight matrices of the network.
This stationary point $\hat{w}$ is given by
\begin{equation}
\label{eq:th-stat}
    \hat{w}=\sum_{\gamma\in\Gamma}\nabla_w\mathcal{P}_{\gamma}(w)\sum_{n\in S_{\gamma}}\lambda_ny_nx_n,\quad
    \lambda_n\geq 0,
\end{equation}
where $S_{\gamma}$ is the context-specific set of support vectors. Just as in (\ref{eq:stat-gunasekar}), we sum over this set of support vectors and project it into the weight space $\mathbb{R}^P$.
The stationary point $\hat{w}$ is then given by the sum of these contextwise projections.

\subsection{Proof Sketch}
We provide here an outline of the proof of \cref{th}.\footnote{A rigorous proof can be found in \cref{app:proof-th}.}
Because this theorem is a relatively straightforward extension of the theorems by \citet{soudry_implicit_2018} and \citet{gunasekar_implicit_2018}, we begin with an outline of their proofs.

\subsubsection{Background: Sketch of Previous Proofs}

\citet{soudry_implicit_2018} consider the loss function
\begin{equation}
\label{eq:exp-loss}
    \mathcal{L}(\beta)=\sum_{n=1}^N\exp(-y_n\langle\beta,x_n\rangle),
\end{equation}
and gradient descent updates $-\eta_t\nabla_{\beta}\mathcal{L}(\beta)$,
where $\eta_t>0$ is the learning rate.
They then rely on two facts: that the gradient descent updates converge to some limit direction and that early updates are eventually forgotten. These imply that for large $t$, the weight direction approaches the limiting direction of the gradient descent updates.

Because
\begin{equation}
    -\nabla_{\beta}\mathcal{L}(\beta)=\sum_{n=1}^N\exp(-y_n\langle\beta,x_n\rangle)y_nx_n,
\end{equation}
the weight $\exp(-\|\beta^{(t)}\|_2y_n\langle\hat{\beta},x_n\rangle)$ will converge to zero as $\|\beta^{(t)}\|_2$ increases.
However, the weights of the data points with the smallest margins, i.e. the support vectors, will converge to zero exponentially slower than all other data points.
Thus, the support vectors dominate the gradient's direction and we can write
\begin{equation}
    \hat{\beta}=\sum_{n\in S}\lambda_ny_nx_n,
\end{equation}
which is (\ref{eq:stat-soudry}). The individual values for $\lambda_n$ are determined by the rate with which the loss of individual data points, $\exp(-y_n\langle\beta,x_n\rangle)$, approaches zero.

\citet{gunasekar_implicit_2018} extend this result by decomposing the loss gradient for polynomial predictors into $\nabla_w\mathcal{L}(w)=\nabla_w\mathcal{P}(w)(-\nabla_{\mathcal{P}(w)}\mathcal{L}(w))$.
The latter part, $-\nabla_{\mathcal{P}(w)}\mathcal{L}(w)$, corresponds to the gradient in the linear predictor.
Even though gradient descent is not performed on this linear predictor directly, the result of \citet{soudry_implicit_2018} generalizes: if $-\nabla_{\mathcal{P}(w)}\mathcal{L}(w)$ converges to some limit direction, we can\footnote{\citet{gunasekar_characterizing_2018} prove this for the exponential loss and note that they expect the result to generalize to exponential-like losses. We prove this generalization in \cref{app:extension-gunasekar}.} again infer that the support vectors will, at large times, dominate this direction. This implies that we can approximate the loss gradient (and thus the weight directional limit) as a sum of support vectors that is projected into the weight space by the Jacobian, as is given by (\ref{eq:stat-gunasekar}).

\subsubsection{Proof Sketch of \cref{th}}

To extend these results to GLNs, we decompose the loss function into a sum over the losses specific to each context,
\begin{equation}
    \mathcal{L}_{\gamma}(w)=\sum_{n:\gamma^{(n)}=\gamma}\exp(-y_n\langle\mathcal{P}_{\gamma}(w),x_n\rangle).
\end{equation}
The gradient can be decomposed similarly into
\begin{equation}
    \nabla_w\mathcal{L}(w)=\sum_{\gamma}\nabla_w\mathcal{L}_{\gamma}(w).
\end{equation}
Extending the strategy of \citet{gunasekar_implicit_2018} to a sum of loss functions, we asymptotically express $w^{(t)}$ as a weighted sum of the individual limit directions,
\begin{equation}
    w^{(t)}\approx \|w^{(t)}\|_2\sum_{\gamma}\xi_{\gamma}\nabla_w\mathcal{P}_{\gamma}(w)(-\nabla_{\mathcal{P}_{\gamma}(w)}\mathcal{L}_{\gamma}(w)).
\end{equation}
(The linear weights $\xi_{\gamma}$ are necessary because the different components of the loss might be scaled differently.)
The contextwise gradients are again dominated by the support vectors and we can absorb $\xi_{\gamma}$ into $\lambda_n$ to arrive at (\ref{eq:th-stat}).

\section{Gated Linear Networks with Two Layers}

\cref{th} allows us to characterize the implicit bias of gradient descent.  However, the minimized norm in this theorem is that of the weights parameterizing the context-dependent linear predictors, not the predictors themselves.  To connect our results directly to the linear predictors, we consider a special case: GLNs of depth 2, with one output neuron and one context for this output neuron. This means that we have two hyperparameters for our architecture: the number of hidden units $H$ in the first layer, and the number of contexts per hidden unit $C$.
As a consequence, the global context is given by $\gamma\in\{1,\dotsc,C\}^H$.
The GLN's weights are given by
\begin{equation}
    w=(w^{(1)},w^{(2)}),\quad
    w^{(1)}\in\mathbb{R}^{H\times C\times D},
    w^{(2)}\in\mathbb{R}^H,
\end{equation}
and the resulting linear predictors are
\begin{equation}
\label{eq:beta-param}
    \beta_{\gamma}=\sum_{h=1}^Hw^{(2)}_hw^{(1)}_{h\gamma_h}\in\mathbb{R}^D.
\end{equation}
Throughout this exposition, we consider as a simple example $H=C=2$, as in \cref{fig:gln-example}.

As we noted in \cref{sec:glns} and are now able to analyse in more detail, the parameterization of these networks affects their inductive bias in two ways.
First, it imposes \emph{architectural constraints} on the resulting linear predictors $\beta_{\gamma}$.
This manifests in an equivariance condition on neighboring predictors: the difference between two linear predictors is invariant to the contexts they share.
In the case of $C=H=2$, this condition is given by
\begin{equation}
    \beta_{21}-\beta_{22}=\beta_{11}-\beta_{12}.
    \label{eq:beta-id}
\end{equation}
As we can see, the two predictors on the left share the first unit's local context and so changing this context does not affect their difference.
More generally (see \cref{app:special-case-arch}), this means that even though $\beta$ specifies a set of $C^H$ linear predictors, we can only choose $(C-1)H+1$ of them freely.

Second, whereas shallow networks minimize the $L_2$ norm, gradient descent on deep GLNs implicitly minimizes a different norm, which we call the \emph{GLN norm} and denote by $\|\cdot\|_{\rm GLN}$.
Importantly, this norm operates on the linear predictors $\beta_{\gamma}$ instead of the underlying global pool of weights.
We characterize $\|\cdot\|_{\rm GLN}$ in \cref{sec:gln-norm}.
First, however, we would like to illustrate why it is important to understand the difference between $\|\cdot\|_2$ and $\|\cdot\|_{\rm GLN}$.
To this end, the next section illustrates that the $\|\cdot\|_{\rm GLN}$ is not only more consistent with a GLN trained with gradient descent, but also leads to better generalization on MNIST.

\subsection{The GLN-Norm Improves Generalization Over the $L_2$ Norm}

\begin{figure*}[t]
\vskip 0.2in
\begin{center}
\centerline{\includegraphics[width=\textwidth]{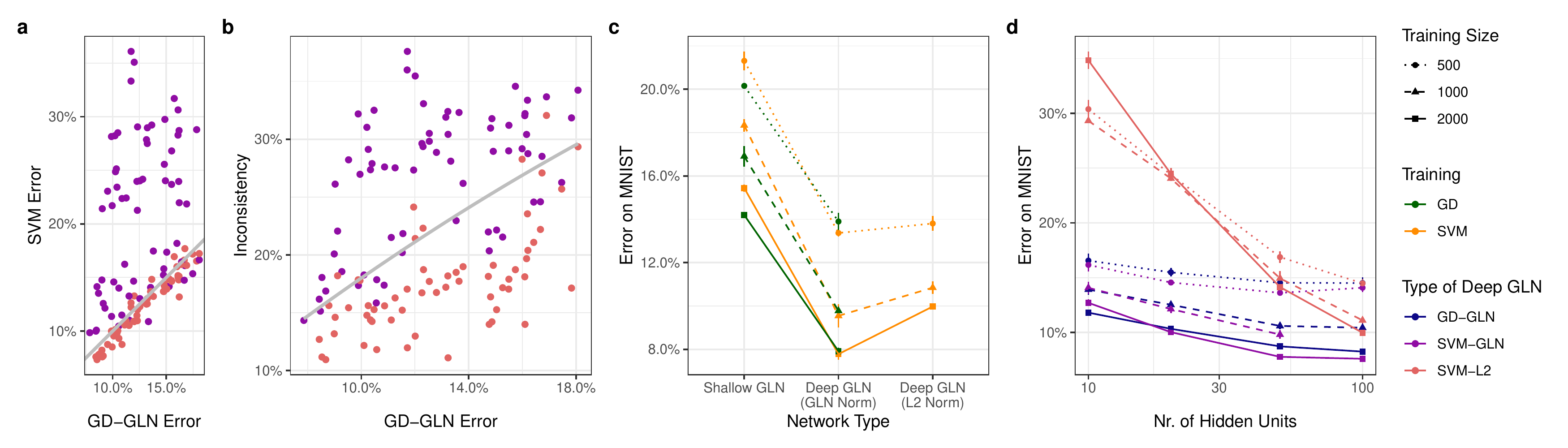}}
\caption{Experiments on GLNs. \textbf{a} The error of the GD-GLN plotted against the error of the SVMs. The grey line represents identical performance. \textbf{b} Inconsistency between the GD-GLN and the SVM plotted against the GD-GLN's error on MNIST. The grey line represents the inconsistency we would expect from a predictor with matched error rate, but no further correlation with the network (see \cref{fn:inconsistency}). \textbf{c} Performance of the best shallow and deep GLNs across the number of contexts and (in the case of deep GLNs) hidden units. \textbf{d} Error of the SVMs and the GD-GLN plotted against the number of hidden units. This plot depicts the networks with two contexts per hidden unit, see \cref{fig:supp-gln} for the networks with four contexts per hidden unit.}
\label{fig:exp-gln}
\end{center}
\vskip -0.2in
\end{figure*}

Using our theorem, we can examine the impact of the architectural constraints on the deep GLN alone or together with the resulting changes in implicit bias.
We fit two support vector machines that respect the deep GLN's architectural constraints, minimizing either the $L_2$ norm (SVM-L2) or the GLN norm (SVM-GLN) while maintaining a fixed margin.  Note that it is the second of these that reflects the full result of our theorem.
Both optimization problems are convex, so we can use convex optimization algorithms \citep{diamond_cvxpy_2016} and are guaranteed to find a global minimum.
In addition, we trained a GLN using gradient descent (GD-GLN) for 3200 steps in PyTorch \citep{paszke_pytorch_2019}.

We trained all our models on MNIST \citep{lecun_mnist_2010}, which we turn into a binary classification problem by grouping together the digits 0-4 and 5-9.
Since we use full-batch convex optimization, we are restricted in the size of our training data, using subsets of 500, 1000, and 2000 data points.
To evaluate generalization, we use a validation dataset with 12000 data points.

We consider GLNs with 10, 20, 50, and 100 hidden units and two or four contexts per hidden unit.
We assigned contexts by partitioning the input space using randomly sampled hyperplanes (as is illustrated in \cref{fig:gln-example}), similar to \citet{veness_online_2017}.
For every architecture, we used three random seeds to sample these hyperplanes.
\cref{fig:exp-gln} depicts the mean and standard deviation across these three runs.\footnote{More details on the experimental setup can be found in \cref{app:experimental-setup}. Code to reproduce all experiments can be found at \url{https://github.com/sflippl/implicit-bias-glns}.}

\cref{fig:exp-gln}a compares the error of the GD-GLN with the SVMs that use the same contexts, number of hidden units, and number of contexts per hidden units.
SVM-GLN matches the GD-GLN in accuracy and even outperforms it slightly.
In contrast, SVM-L2 performs much worse and its performance is only weakly correlated with that of the GD-GLN with matching hyperparameters.

\cref{fig:exp-gln}b depicts the proportion of the validation data for which the SVM predicts different labels than the GD-GLN (inconsistency).
If the GD-GLN had truly converged to the SVM-GLN, the inconsistency would be zero.
Instead the two predictors make inconsistent predictions on a substantial proportion (more than 10\%) of the data.
The inconsistency tends to be lower than we would expect from two models with a matching error rate, but no further correlation (grey line).\footnote{Suppose this error rate is $p$. Inconsistent labels mean that one model makes an error and the other does not. The probability of this happening is $2p(1-p)$.\label{fn:inconsistency}}
It is also much lower than that between the SVM-L2 and the GD-GLN.
Still, this result highlights a substantial difference between the infinite-time predictor we derived and its finite-time counterpart (see Discussion).

Next, we looked at how different choices of depth and width interact with the implicit bias of gradient descent.
\cref{fig:exp-gln}c depicts the best-performing shallow and deep GLN (i.e. one and two layers) across all hyperparameters.
This illustrates that the architectural constraints paired with the $L_2$ norm already allow the deep GLN to find better solutions than the shallow GLN (which simply learns an SVM for each context).
However, we again see that the SVM-GLN further improves in performance over the SVM-L2.
Isolating the effects that making the network deeper has on the functions it can express, and on the solutions that gradient descent discovers in practice, would not have been possible without our theory.

Finally, \cref{fig:exp-gln}d illustrates that the SVM-L2 performs much worse than the SVM-GLN for few hidden units in particular.
This panel also makes particularly apparent that the SVM-GLN tends to slightly outperform the GD-GLN with the same hyperparameters.
This is exactly what we would expect if the GD-GLN slowly converges to the SVM-GLN and if its generalization performance improves throughout infinite training.

\subsection{Understanding the GLN Norm}
\label{sec:gln-norm}

Having seen that it provides a useful inductive bias, we now turn to understanding the GLN norm.
(\ref{eq:beta-param}) makes apparent that $w$ only affects $\beta$ through the auxiliary variable
\begin{equation}
    \zeta_{h\gamma_h}:=w^{(2)}_hw^{(1)}_{h\gamma_h}.
\end{equation}
For a fixed $\zeta$, which choices of $w$ minimize $\|w\|_2^2$?
Intuitively, the $L_2$ norm incentivizes us to distribute magnitudes across parameters equally.
Since $\zeta_h=w^{(2)}_hw^{(1)}_h$, the two parameters should share the magnitude $\|\zeta_h\|_2$ equally, i.e.
\begin{equation}
    |w^{(2)}_h|=\|w^{(1)}_h\|_2=\sqrt{\|\zeta_h\|_2}.
\end{equation}
This implies\footnote{Technically, we have to check equivalence of the KKT conditions. We do so in \cref{app:special-case} and the same intuition applies.} that
\begin{equation}
    \|w\|_2^2=\sum_{h=1}^H\|w^{(1)}_h\|_2^2+|w^{(2)}_h|^2\propto\sum_{h=1}^H\|\zeta_h\|_2,
\end{equation}
and thus
\begin{equation}
\label{eq:beta-gln}
    \|\beta\|_{\rm GLN}=\min_{\zeta}\sum_{h=1}^H\|\zeta_h\|_2,\quad
    \text{s.t. }\beta_{\gamma}=\sum_{h=1}^H\zeta_{h\gamma_h}.
\end{equation}
This means that, when expressed in terms of $\zeta$, the GLN norm takes on the form of a group lasso \citep{yuan_model_2006}.
Importantly, this norm is different from the $L_2$ norm on $\zeta$, which would involve summing up $\|\zeta_h\|_2^2$ instead of $\|\zeta_h\|_2$.
Because it computes the $L_2$ norm of $\zeta_h$, the GLN norm encourages the magnitude of this vector to be as small as possible.
But because it sums up the norm itself instead of its square, it also incentivizes setting entire components $\zeta_h$ to zero, similar to how the L1 norm incentivizes setting single entries of a vector to zero.
Put differently, the GLN norm incentivizes sparsity in the components $\zeta_h$.
Because each component $\zeta_h$ encodes differences in the predictor as a consequence of the different local contexts of the hidden unit $h$, this norm therefore encourages the set of linear predictors to only learn differences between unit-specific contexts if this is actually useful.

We can further illustrate the difference between $\|\cdot\|_{\rm GLN}$ and the $L_2$ norm by considering the special case $H=C=2$.
In this case,
\begin{equation}
\label{eq:beta-norm-2}
    \|\beta\|_{\rm GLN}^2=\|\beta\|_2^2+\tfrac12\sum_{i,j}\|\beta_{ij}-\beta_{\overline{i}j}\|_2\|\beta_{ij}-\beta_{i\overline{j}}\|_2,
\end{equation}
where $\overline{k}$ denotes the local context opposite to $k$, i.e. $\overline{k}=2$ if $k=1$ and $\overline{k}=1$ if $k=2$.  
The GLN norm therefore adds to the $L_2$ norm a component that encourages neighboring predictors to be more similar.
Without any explicit regularization, the equivariant interactions of the GLN cause predictors that share parts of their global context (and thus overlap in their weights) to become more similar to each other.

\section{Frozen-Gate ReLU Networks}

While we were motivated by understanding GLNs, \cref{th} applies to other architectures as well.
We apply the theorem to a particular variation on ReLU networks that makes them generalized gated linear networks: frozen-gate ReLU networks.

A single hidden unit in a ReLU network computes its activation as $z=\max(\langle w,x\rangle,0)$.
That is, it first computes a linear function and then sets any negative values to zero.
For fixed weights, we can also implement this with a gated linear predictor.
More specifically, there are two contexts associated with the hidden unit, depending on the sign of $\langle w,x\rangle$.
If $\langle w,x\rangle >0$, we use the weight $w$ to compute the hidden unit.
If $\langle w,x\rangle\leq 0$, we instead use a zero vector.
The strategy of separating the gates in this way is similar to \citet{lakshminarayanan_neural_2020}, who use it to define a neural tangent kernel.

For fixed weights, this gated linear predictor is exactly equivalent to the ReLU network.
However, we train it by only changing the linear weights, freezing the gates that determine the context for each hidden unit.
We thus call this architecture \emph{frozen-gate ReLU networks} (FReLUs).
Throughout gradient descent, as the weights change but the contexts remain fixed, the FReLU diverges from the standard ReLU network.  To mitigate this divergence, we also consider networks in which the weights are learned in the usual way for a period of time and then the gates are frozen to apply the asymptotic analysis.

Using FReLUs as an approximation, can our theory shed light on the inductive bias of ReLU networks and how it is different from that of GLNs?
To investigate this, we first characterize the implicit bias of FReLUs.
We then compare FReLUs with ReLU networks trained with gradient descent.

\subsection{The Implicit Bias of FReLUs}

FReLUs are structured almost like GLNs, except that one of the two context-gated weight vectors is set to zero.
It is therefore not surprising that they also minimize a group Lasso norm.
Specifically, for a given context $\gamma\in\{0,1\}^H$, we can parameterize the resulting linear predictor as $\beta_{\gamma}=\sum_{h:\gamma_h=1}\zeta_h$, where $\zeta\in\mathbb{R}^{H\times D}$ is an auxiliary variable defined similarly as in \cref{sec:gln-norm}.
FReLUs then minimize the norm
\begin{equation}
\label{eq:frelu-norm}
    \|\zeta\|_{\rm FReLU}:=\sum_{h=1}^H\|\zeta_h\|_2.
\end{equation}
Since the group Lasso encourages sparsity, this means that unless it would otherwise increase the network's margin, $\zeta_h$ will be low or set to zero.
Since the $\zeta_h$'s induce the kinks in the network's separating hypersurface, this means that gradient descent (again without any explicit regularization) encourages this surface to be as straight as possible.

\subsection{Comparing ReLU networks and FReLUs}

\begin{figure*}[t]
\vskip 0.2in
\begin{center}
\centerline{\includegraphics[width=\textwidth]{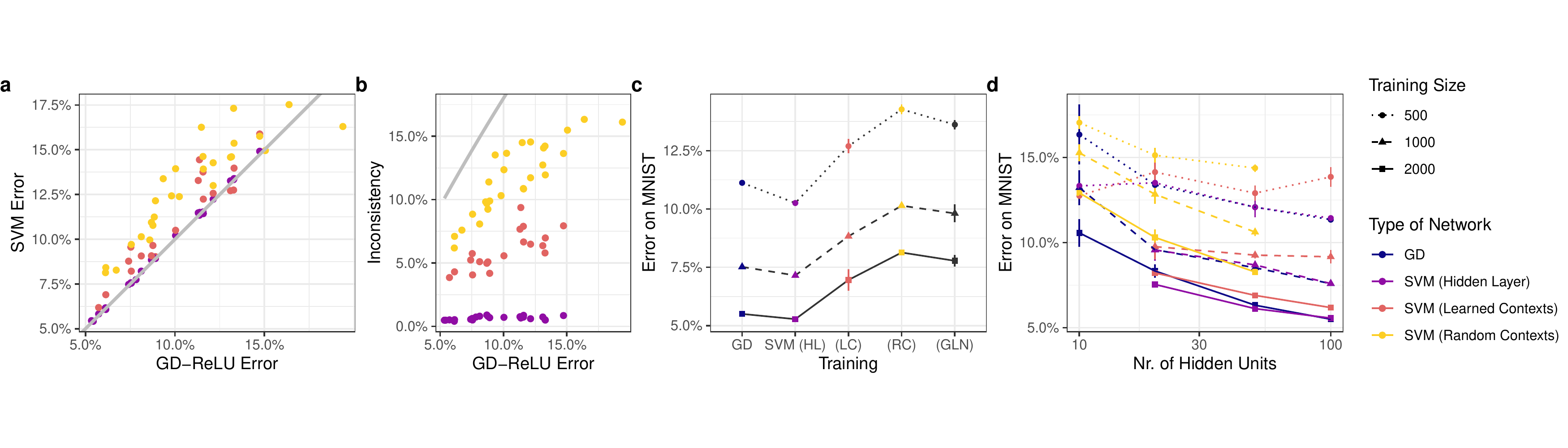}}
\caption{Experiments on ReLU networks. \textbf{a} The error of the GD-ReLU plotted against the SVMs. The grey line represents identical performance. \textbf{b} Inconsistency between the GD-ReLU and the SVMs plotted against the GD-ReLU's error on MNIST. The grey line represents the inconsistency we would expect from a predictor with matched error rate, but no further correlation with the network (see \cref{fn:inconsistency}). \textbf{c} Performance of the best ReLU networks as well as the best deep GLN. To make sure the deep GLN does not have more parameters than the FReLU, we only use those GLNs with two contexts and at most 50 hidden units. \textbf{d} Error of the SVMs and the GD-ReLU plotted against the number of hidden units.}
\label{fig:exp-relu}
\end{center}
\vskip -0.2in
\end{figure*}

Can we use this insight to better understand generalization in ReLU networks trained with gradient descent?
To investigate this, we trained a ReLU network on the binary MNIST task using gradient descent (GD-ReLU).
We trained networks with 10, 20, 50, and 100 hidden units and used three random seeds for initialization.
We then compared each network to an SVM trained on $\|\cdot\|_{\rm FReLU}$ using random contexts and a matching architecture (SVM-RC).
This network's performance is already reasonably correlated with that of the matching GD-ReLU (\cref{fig:exp-relu}a).
However, a substantial proportion of its predictions do not match that of the GD-ReLU (\cref{fig:exp-relu}b), although they are still more consistent than we would expect from an uncorrelated network with matching accuracy.

One reason that the SVM-RC may have a different inductive bias from the GD-ReLU is that the latter adapts its contexts.
To see if this can explain part of the disparity, we trained an SVM on $\|\cdot\|_{\rm FReLU}$ using the context that GD-ReLU has learned at the end of its training (SVM-LC).
Indeed, this predictor generalizes better (\cref{fig:exp-relu}a) and is considerably more consistent with the GD-ReLU (\cref{fig:exp-relu}b).
However, an SVM trained on the hidden layer's activation (SVM-HL) still performs better (it even outperformed the GD-ReLU, \cref{fig:exp-relu}c) and is more consistent with the GD-ReLU.
This is not surprising as the SVM-HL has many fewer free parameters.
Nevertheless, this highlights that there is a nontrivial disparity between SVM-LC and GD-ReLU.

The fact that a FReLU with learned contexts outperforms one with random contexts indicates that learnable contexts can be beneficial for a network.
In particular, \cref{fig:exp-relu}c demonstrates that the SVM-LC outperforms a deep GLN of similar size whereas a FReLU using random contexts does not (and is, in fact, slightly worse).
\cref{fig:exp-relu}d shows that the SVM-LC uniformly outperforms the SVM-RC for any number of hidden units.\footnote{Missing data points indicate that constraints could not be satisfied, or that the optimizer did not converge in the allocated number of iterations (see \cref{app:experimental-setup}).}
However, its disparity to the GD-ReLU and the SVM-HL increases with increasing latent dimensions.

Our analysis demonstrates that the SVM-LC captures a substantial portion of the ReLU network's inductive bias.
FReLUs therefore promise a new perspective on why ReLU networks generalize well: they can adapt their gating function and use the resulting contexts as sparsely as possible.
Still, the SVM-LC is also substantially different from the GD-ReLU.
This may be because of finite-time effects or because the fact that ReLU networks learn their weights and gates in an entangled manner changes their inductive bias.
We leave investigating this question (for example using the results by \citet{lyu_gradient_2020}) to future work.

\section{Discussion}

In this article, we characterized the asymptotic behavior of gradient-descent training of Generalized Gated Linear Networks.
We used this theory to exactly characterize the norm minimized by a deep GLN and confirmed that this allows us to train an SVM that captures its performance.
This allowed us to tease apart the contributions of architectural constraints and the implicit bias of gradient descent, demonstrating that the implicit bias is essential for good generalization.
We also confirmed that this allows us to capture a substantial portion of the inductive bias of ReLU networks, attributing part of their generalization performance to the fact that they (a) learn their contexts and (b) use them as sparsely as possible.
This suggests that we might be able to take inspiration from ReLU networks to devise context learning algorithms for GLNs.
Conversely, a perspective that decomposes gradient-descent training in ReLU networks into context and weight learning, may shed new light on their inductive bias.

Our experiments indicate our theory's potential to help us understand the inductive bias of deep neural networks.
To realize this potential, we must address the fact that the infinite-time deep GLN still makes substantially different predictions from its finite-time counterpart.
This may be due to finite-time effects \citep{arora_implicit_2019}.
Alternatively, gradient descent and convex optimization may have converged to different subsets of the stationary points characterized by our theory \citep{nacson_lexicographic_2019}.

Still, the fact that we can train an SVM that matches (and even outperforms) its corresponding deep GLN indicates that our theory allows us to successfully disentangle the particular optimization procedure used from the inductive bias it implements.
This means that we can consider alternative learning algorithms that find the same stationary points, but have other benefits, for example faster convergence, more efficient computations, or higher biological plausibility.
To this end, comparing the inductive bias of gradient descent to that of the local learning rule conventionally applied to GLNs (for instance using the results by \citet{ji_implicit_2019}) may help us design new local learning rules that generalize better.
Finally, our framework connects networks trained with gradient descent to SVMs, which have formal adversarial protections \citep{mangasarian_arbitrary-norm_1999,gentile_robustness_2003}.
This perspective may therefore allow us to learn more robust networks, either by imposing the results of infinite-time training or by changing the inductive bias.

\section*{Acknowledgements}

We thank David Clark, Tiberiu Tesileanu, and Jacob Portes for helpful comments on an earlier version of the manuscript. We thank David Clark and Elom Amematsro for helpful discussions. Research was supported by NSF NeuroNex Award (DBI-1707398), the Gatsby Charitable Foundation (GAT3708), and the Simons Collaboration for the Global Brain.

\bibliography{main}

\begin{thebibliography}{}

\bibitem[Agrawal et~al., 2018]{agrawal_rewriting_2018}
Agrawal, A., Verschueren, R., Diamond, S., and Boyd, S. (2018).
\newblock A rewriting system for convex optimization problems.
\newblock {\em Journal of Control and Decision}, 5(1):42--60.

\bibitem[Arora et~al., 2019]{arora_implicit_2019}
Arora, S., Cohen, N., Hu, W., and Luo, Y. (2019).
\newblock Implicit {Regularization} in {Deep} {Matrix} {Factorization}.
\newblock In Wallach, H., Larochelle, H., Beygelzimer, A., Alché-Buc, F.~d.,
  Fox, E., and Garnett, R., editors, {\em Advances in {Neural} {Information}
  {Processing} {Systems}}, volume~32. Curran Associates, Inc.

\bibitem[Diamond and Boyd, 2016]{diamond_cvxpy_2016}
Diamond, S. and Boyd, S. (2016).
\newblock {CVXPY}: {A} {Python}-embedded modeling language for convex
  optimization.
\newblock {\em Journal of Machine Learning Research}, 17(83):1--5.

\bibitem[Domahidi et~al., 2013]{domahidi_ecos_2013}
Domahidi, A., Chu, E., and Boyd, S. (2013).
\newblock {ECOS}: {An} {SOCP} solver for embedded systems.
\newblock In {\em European {Control} {Conference} ({ECC})}, pages 3071--3076.

\bibitem[Gentile, 2003]{gentile_robustness_2003}
Gentile, C. (2003).
\newblock The {Robustness} of the p-{Norm} {Algorithms}.
\newblock {\em Machine Learning}, 53(3):265--299.

\bibitem[Grant et~al., 2006]{grant_disciplined_2006}
Grant, M., Boyd, S., and Ye, Y. (2006).
\newblock Disciplined {Convex} {Programming}.
\newblock In Liberti, L. and Maculan, N., editors, {\em Global {Optimization}:
  {From} {Theory} to {Implementation}}, pages 155--210. Springer US, Boston,
  MA.

\bibitem[Gunasekar et~al., 2018a]{gunasekar_characterizing_2018}
Gunasekar, S., Lee, J., Soudry, D., and Srebro, N. (2018a).
\newblock Characterizing {Implicit} {Bias} in {Terms} of {Optimization}
  {Geometry}.
\newblock In Dy, J. and Krause, A., editors, {\em Proceedings of the 35th
  {International} {Conference} on {Machine} {Learning}}, volume~80 of {\em
  Proceedings of {Machine} {Learning} {Research}}, pages 1832--1841. PMLR.

\bibitem[Gunasekar et~al., 2018b]{gunasekar_implicit_2018}
Gunasekar, S., Lee, J.~D., Soudry, D., and Srebro, N. (2018b).
\newblock Implicit {Bias} of {Gradient} {Descent} on {Linear} {Convolutional}
  {Networks}.
\newblock In Bengio, S., Wallach, H., Larochelle, H., Grauman, K.,
  Cesa-Bianchi, N., and Garnett, R., editors, {\em Advances in {Neural}
  {Information} {Processing} {Systems}}, volume~31. Curran Associates, Inc.

\bibitem[He et~al., 2015]{he_delving_2015}
He, K., Zhang, X., Ren, S., and Sun, J. (2015).
\newblock Delving {Deep} into {Rectifiers}: {Surpassing} {Human}-{Level}
  {Performance} on {ImageNet} {Classification}.
\newblock In {\em Proceedings of the {IEEE} {International} {Conference} on
  {Computer} {Vision} ({ICCV})}.

\bibitem[Ji and Telgarsky, 2019]{ji_implicit_2019}
Ji, Z. and Telgarsky, M. (2019).
\newblock The implicit bias of gradient descent on nonseparable data.
\newblock In Beygelzimer, A. and Hsu, D., editors, {\em Proceedings of the
  {Thirty}-{Second} {Conference} on {Learning} {Theory}}, volume~99 of {\em
  Proceedings of {Machine} {Learning} {Research}}, pages 1772--1798. PMLR.

\bibitem[Karush, 1939]{karush_minima_1939}
Karush, W. (1939).
\newblock {\em Minima of functions of several variables with inequalities as
  side conditions.}
\newblock {PhD} {Thesis}, Thesis (S.M.)–University of Chicago, Department of
  Mathematics, December 1939.

\bibitem[Kuhn and Tucker, 1951]{kuhn_nonlinear_1951}
Kuhn, H. and Tucker, A. (1951).
\newblock Nonlinear {Programming}.
\newblock In {\em Proceedings of the {Second} {Berkeley} {Symposium} on
  {Mathematical} {Statistics} and {Probability}}, pages 481--492. University of
  California Press.

\bibitem[Lakshminarayanan and Vikram~Singh, 2020]{lakshminarayanan_neural_2020}
Lakshminarayanan, C. and Vikram~Singh, A. (2020).
\newblock Neural {Path} {Features} and {Neural} {Path} {Kernel} :
  {Understanding} the role of gates in deep learning.
\newblock In Larochelle, H., Ranzato, M., Hadsell, R., Balcan, M.~F., and Lin,
  H., editors, {\em Advances in {Neural} {Information} {Processing} {Systems}},
  volume~33, pages 5227--5237. Curran Associates, Inc.

\bibitem[LeCun et~al., 2010]{lecun_mnist_2010}
LeCun, Y., Cortes, C., and Burges, C. (2010).
\newblock {MNIST} handwritten digit database.
\newblock {\em ATT Labs [Online]. Available: http://yann.lecun.com/exdb/mnist},
  2.

\bibitem[Lyu and Li, 2020]{lyu_gradient_2020}
Lyu, K. and Li, J. (2020).
\newblock Gradient {Descent} {Maximizes} the {Margin} of {Homogeneous} {Neural}
  {Networks}.
\newblock In {\em International {Conference} on {Learning} {Representations}}.

\bibitem[Mangasarian, 1999]{mangasarian_arbitrary-norm_1999}
Mangasarian, O.~L. (1999).
\newblock Arbitrary-norm separating plane.
\newblock {\em Operations Research Letters}, 24(1):15--23.

\bibitem[Nacson et~al., 2019]{nacson_lexicographic_2019}
Nacson, M.~S., Gunasekar, S., Lee, J., Srebro, N., and Soudry, D. (2019).
\newblock Lexicographic and {Depth}-{Sensitive} {Margins} in {Homogeneous} and
  {Non}-{Homogeneous} {Deep} {Models}.
\newblock In Chaudhuri, K. and Salakhutdinov, R., editors, {\em Proceedings of
  the 36th {International} {Conference} on {Machine} {Learning}}, volume~97 of
  {\em Proceedings of {Machine} {Learning} {Research}}, pages 4683--4692. PMLR.

\bibitem[Paszke et~al., 2019]{paszke_pytorch_2019}
Paszke, A., Gross, S., Massa, F., Lerer, A., Bradbury, J., Chanan, G., Killeen,
  T., Lin, Z., Gimelshein, N., Antiga, L., Desmaison, A., Kopf, A., Yang, E.,
  DeVito, Z., Raison, M., Tejani, A., Chilamkurthy, S., Steiner, B., Fang, L.,
  Bai, J., and Chintala, S. (2019).
\newblock {PyTorch}: {An} {Imperative} {Style}, {High}-{Performance} {Deep}
  {Learning} {Library}.
\newblock In Wallach, H., Larochelle, H., Beygelzimer, A., Alché-Buc, F.~d.,
  Fox, E., and Garnett, R., editors, {\em Advances in {Neural} {Information}
  {Processing} {Systems} 32}, pages 8024--8035. Curran Associates, Inc.

\bibitem[Pedersen, 2020]{pedersen_patchwork_2020}
Pedersen, T.~L. (2020).
\newblock {\em patchwork: {The} {Composer} of {Plots}}.

\bibitem[{R Core Team}, 2021]{r_core_team_r_2021}
{R Core Team} (2021).
\newblock {\em R: {A} {Language} and {Environment} for {Statistical}
  {Computing}}.
\newblock R Foundation for Statistical Computing, Vienna, Austria.

\bibitem[Saxe et~al., 2014]{saxe_exact_2014}
Saxe, A., McClelland, J., and Ganguli, S. (2014).
\newblock Exact solutions to the nonlinear dynamics of learning in deep linear
  neural networks.
\newblock International Conference on Learning Represenatations 2014.

\bibitem[Soudry et~al., 2018]{soudry_implicit_2018}
Soudry, D., Hoffer, E., Nacson, M.~S., Gunasekar, S., and Srebro, N. (2018).
\newblock The {Implicit} {Bias} of {Gradient} {Descent} on {Separable} {Data}.
\newblock {\em Journal of Machine Learning Research}, 19(70):1--57.

\bibitem[Stellato et~al., 2020]{stellato_osqp_2020}
Stellato, B., Banjac, G., Goulart, P., Bemporad, A., and Boyd, S. (2020).
\newblock {OSQP}: an operator splitting solver for quadratic programs.
\newblock {\em Mathematical Programming Computation}, 12(4):637--672.

\bibitem[Veness et~al., 2017]{veness_online_2017}
Veness, J., Lattimore, T., Bhoopchand, A., Grabska-Barwinska, A., Mattern, C.,
  and Toth, P. (2017).
\newblock Online {Learning} with {Gated} {Linear} {Networks}.
\newblock {\em arXiv}.
\newblock arXiv: 1712.01897.

\bibitem[Veness et~al., 2021]{veness_gated_2021}
Veness, J., Lattimore, T., Budden, D., Bhoopchand, A., Mattern, C.,
  Grabska-Barwinska, A., Sezener, E., Wang, J., Toth, P., Schmitt, S., and
  Hutter, M. (2021).
\newblock Gated {Linear} {Networks}.
\newblock {\em Proceedings of the AAAI Conference on Artificial Intelligence},
  35(11):10015--10023.

\bibitem[Wickham, 2016]{wickham_ggplot2_2016}
Wickham, H. (2016).
\newblock {\em ggplot2: {Elegant} {Graphics} for {Data} {Analysis}}.
\newblock Springer-Verlag New York.

\bibitem[Yuan and Lin, 2006]{yuan_model_2006}
Yuan, M. and Lin, Y. (2006).
\newblock Model selection and estimation in regression with grouped variables.
\newblock {\em Journal of the Royal Statistical Society: Series B (Statistical
  Methodology)}, 68(1):49--67.
\newblock \_eprint:
  https://rss.onlinelibrary.wiley.com/doi/pdf/10.1111/j.1467-9868.2005.00532.x.

\end{thebibliography}
\bibliographystyle{apalike}

\newpage
\appendix

\section{Experimental Setup}
\label{app:experimental-setup}

\subsection{Training the Finite-Time Predictors}

We train the models that use gradient descent with PyTorch \citep{paszke_pytorch_2019} and PyTorch Lightning (\url{https://github.com/PyTorchLightning/pytorch-lightning}) in Python 3.9.
We train all models for 1600 steps with a learning rate of 0.04 and for another 1600 steps with a learning rate of 0.01.

For the deep and shallow GLNs, we use orthogonal initialization \citep{saxe_exact_2014} for the weights and determine the contexts using hyperplanes, following \citet{veness_online_2017}.
In particular, we randomly sample hyperplanes $w\in\mathbb{R}^D$ in the input space using a normal distribution with mean $\mu=0$ and standard deviation $\sigma=36$.
We randomly sample a cutoff $b\in\mathbb{R}$ with mean $\mu=0$ and standard deviation $\sigma=9$.
The corresponding context function is then given by
$$
C(x)=
\begin{cases}
1&\text{ if }w^Tx-b\geq 0,\\
0&\text{ if }w^Tx-b<0.
\end{cases}
$$
To generate a context function with more than two possible contexts, we compose multiple hyperplanes, mapping each unique region produced by these multiple hyperplanes to its own context.
Unlike \citet{veness_online_2017}, we additionally use a cutoff $b$ that makes $C$ balanced, i.e. maps half the training data to $1$ and half the training data to $0$.
We use this median cutoff in the results presented in the main article.
\cref{app:ext-gln} discusses results on a random cutoff.

For the ReLU networks, we use Kaiming normal initialization \citep{he_delving_2015}.

\subsection{Convex Optimization}
\label{sec:cp}
To solve the convex optimization problems, we used the \texttt{cvxpy} library in Python \citep{diamond_cvxpy_2016,agrawal_rewriting_2018}, which follows the paradigm of Disciplined Convex Programming (DCP) \citep{grant_disciplined_2006}.
DCP follows a set of conventions on how to formalize convex optimization problems.
We trained the shallow GLN as well as the SVM-L2 and SVM-HL using the OSQP algorithm \citep{stellato_osqp_2020} with at most 10,000 iterations.
We trained the SVM-GLN, SVM-RC, and SVM-LC using the ECOS algorithm \citep{domahidi_ecos_2013} with at most 200 iterations.
We sampled the random contexts for the FReLUs using the same method as for the GLNs with median initialization.

Whereas the other architectures were easily translated into the DCP conventions, the SVM-L2 predictor required a bit more attention.
More specifically, it is more natural to express the architectural constraints by specifying $\zeta$ instead of $\beta$ and so we wanted to compute the equivalent of $\|\beta\|_2^2$ for $\zeta$.
It turns that this is given by

\begin{equation}
    \|A\zeta\|_2^2,\quad
    (A\zeta)_{hc}:=\frac{1}{HC}\sum_{h',c'}\zeta_{h'c'}-\sum_{c'}\zeta_{hc'}+C\zeta_{hc},
\end{equation}

which we prove below.

\begin{proof}
We have to prove that $\|\beta\|_2^2=\|A\zeta\|_2^2$.
Since $\beta$ satisfies our architectural constraints, we know that there is a $\zeta$ such that
$$
\beta_{\gamma}=\sum_{h=1}^H\zeta_{h\gamma_h}
$$
for all $\gamma$.
Thus,
\begin{align*}
    \|\beta\|_2^2&=\sum_{\gamma\in\{1,\dotsc,C\}^H}\left\|\sum_{h=1}^H\zeta_{h\gamma_h}\right\|_2^2=\sum_{\gamma\in\{1,\dotsc,C\}^H}\sum_{h,h'=1}^H\langle\zeta_{h\gamma_h},\zeta_{h'\gamma_{h'}}\rangle=\Delta_1+\Delta_2,
\end{align*}
where we define $\Delta_1$, $\Delta_2$ by changing the order of the summation operators and splitting the summation over $h$ and $h'$ into the case $h=h'$ (for $\Delta_1$) and $h\neq h'$ (for $\Delta_2$), i.e.
\begin{align*}
    \Delta_1=\sum_{h=1}^H\sum_{\gamma\in\{1,\dotsc,C\}^H}\|\zeta_{h\gamma_h}\|_2^2,\quad\Delta_2=\sum_{h\neq h'}\sum_{\gamma\in\{1,\dotsc,C\}^H}\langle\zeta_{h\gamma_h},\zeta_{h\gamma_h'}\rangle.
\end{align*}
We can now simplify these equations by noting that $\zeta_{h\gamma_h}$ is invariant to all but one dimension of our iterator $\gamma$:
\begin{align*}
    \Delta_1=C^{H-1}\sum_{h=1}^H\sum_{c=1}^C\|\zeta_{hc}\|_2^2,\quad
    \Delta_2=C^{H-2}\sum_{h\neq h'}\sum_{c,c'=1}^C\langle\zeta_{hc},\zeta_{h'c'}\rangle.
\end{align*}
Defining
\begin{align}
\begin{split}
    M\in\mathbb{R}^{(H\times C\times D)^2},\quad
    M_{\substack{hcd\\h'c'd'}}:=\delta_{dd'}\left(C\delta_{hh'}\delta_{cc'}+(1-\delta_{hh'})\right)=\delta_{dd'}\left(1+C(\delta_{hh'}\delta_{cc'})-\delta_{hh'}\right),
\end{split}
\end{align}
we can rewrite this norm as
\begin{align*}
    \|\beta\|_2^2=C^{H-2}\sum_{\substack{hcd\\h'c'd'}}M_{\substack{hcd\\h'c'd'}}\zeta_{hcd}\zeta_{h'c'd'}.
\end{align*}
All that remains to show is that $A^2=M$ (note that $A$ is symmetric).

To show this, we make a parameterized guess, following the ansatz
\begin{align*}
    \begin{split}
        A\in\mathbb{R}^{(H\times C\times D)^2},\quad
    A_{\substack{hcd\\h'c'd'}}=\delta_{dd'}\left(\alpha+\beta\delta_{hh'}+\gamma\delta_{cc'}+\kappa\delta_{hh'}\delta_{cc'}\right)
    \end{split}
\end{align*}
We thus require
\begin{align*}
    M_{\substack{hcd\\h'c'd'}}=\sum_{h'',c'',d''}A_{\substack{hcd\\h''c''d''}}A_{\substack{h''c''d''\\h'c'd'}}=\delta_{dd'}\left(HC\alpha+C\beta\delta_{hh'}+H\gamma\delta_{cc'}+\kappa\delta_{hh'}\delta_{cc'}\right),
\end{align*}
and therefore
\begin{equation}
    \alpha=\tfrac{1}{HC},
    \beta=-1,
    \gamma=0,
    \kappa=C.
\end{equation}
\end{proof}

\subsection{Data Analysis}

We performed all data analysis in R. \citep{r_core_team_r_2021}
All figures (except for \cref{fig:gln-example}a, for which we used Inkscape) were created using ggplot2 \citep{wickham_ggplot2_2016} and patchwork \citep{pedersen_patchwork_2020}.
In the supplementary material, we provide an R package that contains all data as well as the code reproducing all figures.

\section{Proofs}

\subsection{Proof of \cref{th}}
\label{app:proof-th}

We begin by restating the theorem:

\ghpt*

The theorem is operating under the following fundamental assumptions:

\begin{assumption}
\label{ass:exp-like}
$\ell$ is an exponential-like loss.
\end{assumption}

\begin{assumption}
\label{ass:loss-to-zero}
$\mathcal{L}(w^{(t)})\to0.$
\end{assumption}

\begin{assumption}
\label{ass:w-converges}
$w^{(t)}$ converges in direction to some $\hat{w}$.
\end{assumption}

In addition, we consider two assumptions excluding pathological cases, \cref{ass:pos-margin,ass:beta-gradient}, which we introduce at the point where they become necessary.

To prove the theorem, we must demonstrate (1) primal feasibility and (2) stationarity.

\textbf{Primal feasibility} simply involves proving that there is some $w^{\ast}=\alpha\hat{w}$ such that

$$
y^{(n)}f_{w^{\ast}}(x^{(n)};\gamma^{(n)})\geq1.
$$

This is fairly straightforward, with a minor complication being created by the fact that different polynomials $\mathcal{P}_{\gamma}$ may have different degrees $\nu_{\gamma}$.

For any given $\gamma$, we define the minimal margin of this context's data,
\begin{equation}
    m_{\gamma}:=\min_{n:\gamma^{(n)}=\gamma}y^{(n)}f_{\hat{w}}(x^{(n)};\gamma).
\end{equation}
By assumption, we know that $L(w^{(t)})\to 0$, and so we are guaranteed that for all $n$ and large enough $t$, $y^{(n)}f_{w^{(t)}}(x^{(n)})>0$.
However, we wish to exclude the pathological case, in which the margin of the normalized weight $\hat{w}^{(t)}$ still converges to $0$ instead of a positive value.
This motivates the following assumption:

\begin{assumption}
\label{ass:pos-margin}
For all $n$, $y^{(n)}f_{ \hat{w}}(x^{(n)};\gamma^{(n)})>0$.
\end{assumption}

From this assumption, we know that for all $\gamma$, $m_{\gamma}>0$.
Scaling $w$ by some $\alpha$ results in the changed margin

$$
y^{(n)}f_{\alpha\hat{w}}(x^{(n)};\gamma^{(n)})=\alpha^{\nu_{\gamma^{(n)}}}y^{(n)}f_{\hat{w}}(x^{(n)};\gamma^{(n)}).
$$

So for each $\gamma$, we can set the minimal margin to $1$ by scaling $w$ by $m_{\gamma}^{-1/\nu_{\gamma}}$.
Since we want to make sure that the margin for each context is not smaller than $1$, we scale $w$ by
\begin{equation}
    \alpha:=\max_{\gamma}m_{\gamma}^{-1/\nu_{\gamma}},
\end{equation}
setting $w^{\ast}:=\alpha\hat{w}$.
Thus, we know that $w^{\ast}$ satisfies the margin constraints (i.e. primal feasibility) and are left to check whether it also satisfies stationarity.

To demonstrate \textbf{stationarity}, we follow the same strategy as \citet{gunasekar_implicit_2018}.
To a large extent, we can make arguments that are exactly analogous to theirs and we refer to their proof in these cases.
For each $\gamma$, we consider the resulting sequence of linear predictors $\beta_{\gamma}^{(t)}=\mathcal{P}_{\gamma}(w^{(t)})$.
We would like to apply \cref{lem:gunasekar} (which allows us to consider any exponential-like loss, in contrast to \citet{gunasekar_implicit_2018}).
For this purpose, we consider the contextwise loss function
\begin{equation}
    \mathcal{L}_{\gamma}(\beta_{\gamma})=\sum_{n:\gamma^{(n)}=\gamma}\ell(y^{(n)}\langle\beta_{\gamma},x^{(n)}\rangle).
\end{equation}
Since the sum of all these loss functions converges to zero and all $\mathcal{L}_{\gamma}$ or nonnegative, we immediately know that $\mathcal{L}(\beta^{(t)})\to 0$.
Analogous to \citet{gunasekar_implicit_2018}, we also know that $\beta_{\gamma}^{(t)}/\|\beta^{(t)}_{\gamma}\|_2\to\hat{\beta}$, where
\begin{equation}
    \hat{\beta}:=\frac{\mathcal{P}(\hat{w})}{\left\|\mathcal{P}(\hat{w})\right\|}.
\end{equation}
All that is left is to exclude pathological cases where the gradient of the loss in the linear predictor does not converge:

\begin{assumption}
\label{ass:beta-gradient}
For all $\gamma$, $\nabla_{\beta}\mathcal{L}_{\gamma}(\beta_{\gamma}^{(t)})$ converges in direction to some $\hat{z}_{\gamma}$.
\end{assumption}

This allows us to apply \cref{lem:gunasekar} and infer that for each $\gamma$,
\begin{equation}
    \hat{z}_{\gamma}=\sum_{n\in S_{\gamma}}\lambda_ny_nx_n,\quad\lambda_n\geq0.
\end{equation}
The gradient update $\Delta w^{(t)}=\eta_t\nabla_w\mathcal{L}(w)$ can be decomposed into contextwise updates
\begin{equation}
    \Delta w^{(t)}_{\gamma}=\eta_t\nabla_w\mathcal{L}_{\gamma}(w).
\end{equation}
Similarly, we write
\begin{equation}
    w^{(t)}_{\gamma}:=w^{(0)}/|\Gamma|+\sum_{\gamma}\Delta w^{(t)}_{\gamma},
\end{equation}
which implies that
\begin{equation}
    w^{(t)}=\sum_{\gamma}w^{(t)}_{\gamma}
\end{equation}
(We could distribute the initial value $w^{(0)}=\sum_{\gamma\in\Gamma}w^{(0)}/|\Gamma|$ in different ways and only write it this way for convenience's sake. If $\Gamma$ is an infinite set, we leave away all empty contexts without loss of generalization.)
From \citet{gunasekar_implicit_2018}, equation (24), we know that we can write
\begin{equation}
    w^{(t)}_{\gamma}=\left(\nabla_w\mathcal{P}_{\gamma}(w)\hat{z}+\delta_{\gamma}^{(t)}\right)\sum_{u<t}\eta_up_{\gamma}(u)g(u)^{\nu_{\gamma}-1},
\end{equation}
where
\begin{equation}
    p(u)=\|z_{\gamma}^{(t)}\|_2,\quad
    g(u)=\|w^{(t)}\|_2,
\end{equation}
and $\delta_{\gamma}^{(t)}\to0$ is analogous to $\delta_3^{(t)}$ in their article.

For the purpose of a shorter notation, we now define
\begin{equation}
    k_{\gamma}^{(t)}:=\sum_{u<t}\eta_up_{\gamma}(u)g(u)^{\nu_{\gamma}-1},
\end{equation}
which means that
\begin{equation}
    w^{(t)}=\sum_{\gamma\in\Gamma}k^{(t)}_{\gamma}\left(\nabla_w\mathcal{P}_{\gamma}(w^{(\infty)})\hat{z}^{(\infty)}+\delta_{\gamma}^{(t)}\right).
\end{equation}
This $k^{(t)}_{\gamma}$ serves two purposes: it encodes the fact that $w^{(t)}$ diverges (since $k_{\gamma}^{(t)}$ diverges), and it specifies the scale of contributions to each contextwise gradient.
We thus disentangle these two purposes by defining the overall scale
\begin{equation}
    k^{(t)}:=\sum_{\gamma\in\Gamma}k_{\gamma}^{(t)},
\end{equation}
and the weights
\begin{equation}
    \hat{k}^{(t)}_{\gamma}:=k^{(t)}_{\gamma}/k^{(t)}.
\end{equation}
(Since all $k^{(t)}_{\gamma}$ diverge, we consider a large enough $t$ such that all $k_{\gamma}^{(t)}>0$.)
Defining
\begin{equation}
    \tilde{\delta}^{(t)}=\sum_{\gamma\in\Gamma}\hat{k}_{\gamma}^{(t)}\delta_{\gamma}^{(t)}\to 0,
\end{equation}
we can rewrite $w^{(t)}$ in a way that makes it more obvious how normalization will affect it:
\begin{equation}
    w^{(t)}=k^{(t)}\left(\sum_{\gamma}\hat{k}_{\gamma}^{(t)}\nabla_w\mathcal{P}_{\gamma}(w^{(\infty)})\hat{z}^{(\infty)}+\tilde{\delta}^{(t)}\right).
\end{equation}
We thus know that the normalized sequence of weights is given by
\begin{equation}
    \frac{w^{(t)}}{\|w^{(t)}\|}=\frac{\sum_{\gamma}\hat{k}_{\gamma}^{(t)}\nabla_w\mathcal{P}_{\gamma}(w^{(\infty)})\hat{z}^{(\infty)}+\tilde{\delta}^{(t)}}
    {\left\|\sum_{\gamma}\hat{k}_{\gamma}^{(t)}\nabla_w\mathcal{P}_{\gamma}(w^{(\infty)})\hat{z}^{(\infty)}+\tilde{\delta}^{(t)}\right\|}.
\end{equation}
We are left with two observation that will allow us to determine the limit of this equation and thus prove the theorem.
First, we can infer, analogous to Claim 1 in \citet{gunasekar_implicit_2018},
\begin{equation}
    \left\|\sum_{\gamma}\hat{k}_{\gamma}^{(t)}\nabla_w\mathcal{P}_{\gamma}(w^{(\infty)})\hat{z}^{(\infty)}\right\|>0,
\end{equation}
for large enough $t$ that all $\hat{k}_{\gamma}^{(t)}>0$.
Second, we must consider the limit of $\hat{k}_{\gamma}^{(t)}$.
Here, we face a small complication introduced by the context-gated setup: though we know that $\hat{k}_{\gamma}^{(t)}$ is upper bounded by $1$, we do not know whether it converges.
It is possible, for instance, that this sequence oscillates between two different values.
However, because the sequence is bounded, we know that it has a convergent subsequence $(t_s)_{s\in\mathbb{N}}$.
(For example, this subsequence may take into account only one of the two values between which $\hat{k}_{\gamma}^{(t)}$ may oscillate.)
We choose some bounded subsequence and define the limit
\begin{equation}
    \hat{k}_{\gamma}^{(t_s)}\to\hat{k}_{\gamma}.
\end{equation}
These two observations together allow us to infer that
\begin{equation}
    \hat{w}=\lim_{t\to\infty}\frac{w^{(t)}}{\|w^{(t)}\|}=\frac{\sum_{\gamma}\hat{k}_{\gamma}\nabla_w\mathcal{P}_{\gamma}(\hat{w})\hat{z}}
    {\left\|\sum_{\gamma}\hat{k}_{\gamma}\nabla_w\mathcal{P}_{\gamma}(\hat{w})\hat{z}\right\|}.
\end{equation}
This is clearly a linear sum of support vectors and since $w^{\ast}$ is a positive scaling of $\hat{w}$, it, too can be written as a linear sum of support vectors, proving the theorem.

\subsection{The Inductive Bias of GLNs with Two Layers}
\label{app:special-case}

\subsubsection{Architectural Constraints}
\label{app:special-case-arch}

We stated in the main article that architectural constraints are given by the fact that the difference between two predictors is invariant to the contexts they share, and that for a given architecture, we can choose exactly $(C-1)H+1$ linear predictors freely.
\cref{lem:arch-1} formalizes the first statement, \cref{lem:arch-2} the second one.

\begin{lemma}
\label{lem:arch-1}
Consider two pairs of context $\gamma^{(1)},\gamma^{(2)}$ and $\tilde{\gamma}^{(1)},\tilde{\gamma}^{(2)}$. We call this pair \emph{different only in contexts they share} if the following two conditions are true:

\begin{enumerate}
    \item If $\gamma^{(1)}_h=\gamma^{(2)}_h$, then $\tilde{\gamma}^{(1)}_h=\tilde{\gamma}^{(2)}_h$,
    \item If $\gamma^{(1)}_h\neq\gamma^{(2)}_h$, then $\gamma^{(i)}=\tilde{\gamma}^{(i)}$ for $i=1,2$.
\end{enumerate}

For any such pair of pairs,
\begin{equation}
\label{eq:arch-1}
    \beta_{\gamma^{(2)}}-\beta_{\gamma^{(1)}}=\beta_{\tilde{\gamma}^{(2)}}-\beta_{\tilde{\gamma}^{(1)}}.
\end{equation}
\end{lemma}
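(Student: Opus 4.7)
The plan is to read the statement off directly from the parameterization (\ref{eq:beta-param}) and track which indices contribute to each side. Since $\beta_\gamma = \sum_{h=1}^H w^{(2)}_h w^{(1)}_{h\gamma_h}$ is a sum of independent contributions from each hidden unit $h$, the difference of any two predictors reduces to a sum over only those indices where the two contexts disagree:
\begin{equation*}
\beta_{\gamma^{(2)}} - \beta_{\gamma^{(1)}} = \sum_{h : \gamma^{(1)}_h \neq \gamma^{(2)}_h} w^{(2)}_h \bigl( w^{(1)}_{h \gamma^{(2)}_h} - w^{(1)}_{h \gamma^{(1)}_h} \bigr),
\end{equation*}
since the terms with $\gamma^{(1)}_h = \gamma^{(2)}_h$ cancel. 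So proving (\ref{eq:arch-1}) reduces to two routine observations: (i) the set of indices $D = \{h : \gamma^{(1)}_h \neq \gamma^{(2)}_h\}$ at which contributions survive coincides with its counterpart $\tilde D = \{h : \tilde\gamma^{(1)}_h \neq \tilde\gamma^{(2)}_h\}$, and (ii) at each such index the summands themselves are equal.

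For (i), the contrapositive of condition (1) in the lemma gives $\tilde D \subseteq D$, while condition (2) gives the reverse inclusion: for $h \in D$, condition (2) forces $\tilde\gamma^{(i)}_h = \gamma^{(i)}_h$ for $i = 1, 2$, whence $\tilde\gamma^{(1)}_h \neq \tilde\gamma^{(2)}_h$, i.e. $h \in \tilde D$. Thus $D = \tilde D$. For (ii), the same equality $\tilde\gamma^{(i)}_h = \gamma^{(i)}_h$ at $h \in D$ yields $w^{(1)}_{h \tilde\gamma^{(i)}_h} = w^{(1)}_{h \gamma^{(i)}_h}$, so the surviving summands agree term by term. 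Combining these gives $\beta_{\tilde\gamma^{(2)}} - \beta_{\tilde\gamma^{(1)}} = \beta_{\gamma^{(2)}} - \beta_{\gamma^{(1)}}$.

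There is no real obstacle here beyond bookkeeping the two conditions carefully; the essential content is already packaged in (\ref{eq:beta-param}), namely that each hidden unit contributes an additive term depending only on its own local context, so quantities built from differences of $\beta$ are blind to the values of coordinates where the two contexts agree.
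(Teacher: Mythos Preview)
Your proof is correct and follows essentially the same approach as the paper: both arguments expand $\beta_{\gamma^{(2)}}-\beta_{\gamma^{(1)}}$ via (\ref{eq:beta-param}) and verify termwise equality, using condition (1) to handle the indices where $\gamma^{(1)}_h=\gamma^{(2)}_h$ and condition (2) for the rest. The only cosmetic difference is that the paper argues the two cases directly at each $h$, whereas you first establish $D=\tilde D$ and then match the surviving summands; the content is the same.
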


\begin{proof}
We know that
\begin{equation}
    \beta_{\gamma^{(2)}}-\beta_{\gamma^{(1)}}=
    \sum_{h=1}^Hw^{(2)}_h\left(w^{(1)}_{h\gamma^{(2)}_h}-w^{(1)}_{h\gamma^{(1)}_h}\right).
\end{equation}
If we prove that
\begin{equation}
    w^{(1)}_{h\gamma^{(2)}_h}-w^{(1)}_{h\gamma^{(1)}_h}=
    w^{(1)}_{h\tilde{\gamma}^{(2)}_h}-w^{(1)}_{h\tilde{\gamma}^{(1)}_h},
\end{equation}
for all $h$, we have proven the lemma.

To prove the equation, we consider two cases. If $\gamma^{(1)}_h=\gamma^{(2)}_h$, both sides of the equation are zero. If $\gamma^{(1)}_h\neq\gamma^{(2)}_h$, then $w^{(1)}_{h\gamma^{(i)}_h}=w^{(1)}_{h\tilde{\gamma}^{(i)}_h}$ for $i=1,2$ and again, the equation holds true.
\end{proof}

\begin{lemma}
\label{lem:arch-2}
Consider the set of contexts $\overline{\Gamma}$, where at most one hidden unit's local context is different from one, $\gamma_h\neq 1$.
We can pick $w$ to parameterize an arbitrary set of linear predictors $\beta_{\gamma}$ for all $\gamma\in\overline{\Gamma}$ and this set, in turn, uniquely determines all other linear predictors.
This means we can pick exactly $|\overline{\Gamma}|=(C-1)H+1$ linear predictors freely.
\end{lemma}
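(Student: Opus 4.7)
The plan is to prove three things: the cardinality claim, realizability (every prescribed family of predictors on $\overline{\Gamma}$ is achieved by some $w$), and determination (once fixed on $\overline{\Gamma}$, the predictors $\beta_\gamma$ on all of $\{1,\ldots,C\}^H$ are uniquely determined). Write $\gamma^\star := (1,\ldots,1)$ and, for $h \in \{1,\ldots,H\}$ and $c \in \{2,\ldots,C\}$, let $e_h(c)$ denote the context equal to $c$ in coordinate $h$ and to $1$ elsewhere. Then $\overline{\Gamma} = \{\gamma^\star\} \cup \{e_h(c) : 1 \leq h \leq H,\; 2 \leq c \leq C\}$, and since these elements are pairwise distinct, $|\overline{\Gamma}| = 1 + H(C-1)$, settling the cardinality.

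For determination, I would use a telescoping argument through \cref{lem:arch-1}. Given any $\gamma$, set $\gamma^{(0)} := \gamma^\star$ and let $\gamma^{(h)}$ agree with $\gamma$ on the first $h$ coordinates and with $\gamma^\star$ on the rest, so that $\gamma^{(H)} = \gamma$ and
\begin{equation*}
    \beta_\gamma - \beta_{\gamma^\star} = \sum_{h=1}^H \bigl(\beta_{\gamma^{(h)}} - \beta_{\gamma^{(h-1)}}\bigr).
\end{equation*}
The pair $(\gamma^{(h-1)}, \gamma^{(h)})$ differs only in coordinate $h$, taking values $1$ and $\gamma_h$ there; the pair $(\gamma^\star, e_h(\gamma_h))$ differs in the same coordinate with the same two values and agrees on all other coordinates. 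A direct check shows both conditions of \cref{lem:arch-1} hold, so $\beta_{\gamma^{(h)}} - \beta_{\gamma^{(h-1)}} = \beta_{e_h(\gamma_h)} - \beta_{\gamma^\star}$ (vacuously zero when $\gamma_h = 1$). Summing yields the closed form
\begin{equation*}
    \beta_\gamma = \beta_{\gamma^\star} + \sum_{h \,:\, \gamma_h \neq 1} \bigl(\beta_{e_h(\gamma_h)} - \beta_{\gamma^\star}\bigr),
\end{equation*}
which depends only on the values of $\beta$ on $\overline{\Gamma}$, proving determination.

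For realizability, I would give an explicit construction. Set $w^{(2)}_h := 1$ for every $h$, so that $\beta_\gamma = \sum_h w^{(1)}_{h\gamma_h}$. Given targets $\{\beta^\ast_\gamma\}_{\gamma \in \overline{\Gamma}}$, assign $w^{(1)}_{11} := \beta^\ast_{\gamma^\star}$, $w^{(1)}_{h1} := 0$ for $h \geq 2$, and $w^{(1)}_{hc} := w^{(1)}_{h1} + (\beta^\ast_{e_h(c)} - \beta^\ast_{\gamma^\star})$ for $c \geq 2$. Then $\beta_{\gamma^\star} = \sum_h w^{(1)}_{h1} = \beta^\ast_{\gamma^\star}$ and $\beta_{e_h(c)} - \beta_{\gamma^\star} = w^{(1)}_{hc} - w^{(1)}_{h1} = \beta^\ast_{e_h(c)} - \beta^\ast_{\gamma^\star}$, so the construction matches all targets. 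The only substantive step in the whole argument is checking that the pair-of-pairs used in the telescoping satisfies the hypotheses of \cref{lem:arch-1}; once that is verified, the remainder is direct substitution, so no real obstacle is anticipated.
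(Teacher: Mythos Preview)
Your proof is correct and follows essentially the same approach as the paper: the explicit weight construction for realizability is identical (up to notation), and your telescoping sum for determination is just a non-inductive repackaging of the paper's finite induction, both invoking \cref{lem:arch-1} on the same pair-of-pairs to reduce an arbitrary $\beta_\gamma$ to the values on $\overline{\Gamma}$. The closed form you obtain is exactly what the paper's recursion $\beta_\gamma = \beta_{\gamma_{ch}} + \beta_{\tilde\gamma} - \beta_{\gamma_1}$ unrolls to.
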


\begin{proof}
Consider an arbitrary set of predictors $\beta_{\gamma}$, $\gamma\in\overline{\Gamma}$.
Let us denote by $\gamma_1=(1)_{h=1,\dotsc,H}$ the vector where all contexts are $1$ and by $\gamma_{ch}=(1+(c-1)\delta_{h'h})_{h'=1,\dotsc,H}$ the vector where all entries are $1$ except for the $h$-th context, which is $c$. Any $\gamma\in\overline{\Gamma}$ can be expressed as either $\gamma_1$ or $\gamma_{hc}$.

We then define the weights $w^{(1)}\in\mathbb{R}^{C\times H\times D}$ as
\begin{align}
\begin{aligned}
    w^{(1)}_{11}&:=\beta_{\gamma_1},&\forall_{c>1}w^{(1)}_{c1}&:=\beta_{\gamma_{c1}},\\
    \forall_{h>1}w^{(1)}_{1h}&:=0,&
    \forall_{c>1}w^{(1)}_{ch}&:=\beta_{\gamma_{ch}}-\beta_{\gamma_1},
\end{aligned}
\end{align}
and $w^{(2)}=(1)_{h=1,\dotsc,H}$.

Using this definition, we can see that the predictor $\tilde{\beta}$ parameterized by $w$ is identical to $\beta$ for all $\gamma\in\overline{\Gamma}$:
\begin{align*}
    &\tilde{\beta}_{\gamma_1}=w^{(2)}_1w^{(1)}_{11}=\beta_{\gamma_1},
    \forall_{c>1}\tilde{\beta}_{\gamma_{c1}}=w^{(2)}_1w^{(1)}_{c1}=\beta_{\gamma_{c1}},\\
    &\forall_{h>1}\forall_{c>1}\tilde{\beta}_{\gamma_{ch}}=w_{11}^{(1)}+w_{ch}^{(1)}=\beta_{\gamma_{ch}}.
\end{align*}
We now show that, given this set of predictors, we can use (\ref{eq:arch-1}) to define all other predictors.
We use finite induction: consider some $h=1,\dotsc,H$ and suppose that we have already uniquely defined all $\beta_{\gamma}$ if $\gamma_k=1$ for all $k\geq h$ (for $h=1$, this is trivially true as $\gamma_1\in\overline{\Gamma}$).
For the induction, we must uniquely define any context $\gamma$ where $\gamma_h=c\neq 1$ and $\gamma_k=1$ for all $k\geq h+1$.

For any such context we consider the pair of contexts $\gamma,\gamma_{ch}$ and $\tilde{\gamma},\gamma_1$, where $\tilde{\gamma}_k=\gamma_k$ for all $k\neq h$ and $\tilde{\gamma}_h=1$. Since $\gamma_{ch},\gamma_1\in\overline{\Gamma}$, $\beta_{\gamma_{ch}},\beta_{\gamma_1}$ have already been defined. Since $\tilde{\gamma}_k=1$ for all $k\geq h$, $\beta_{\tilde{\gamma}}$ has been defined by the induction's assumption. Since these pairs are different only in contexts they share (namely only in dimension $h$), this immediately implies that
\begin{equation}
    \beta_{\gamma}=\beta_{\gamma_{ch}}+\beta_{\tilde{\gamma}}-\beta_{\gamma_1},
\end{equation}
is uniquely defined as well.
\end{proof}

\subsubsection{The GLN Norm}
\label{app:special-case-norm}

To prove (\ref{eq:beta-gln}), we prove the following statement:

\begin{proposition}
For a deep GLN with two layers, under \cref{ass:exp-like,ass:loss-to-zero,ass:w-converges,ass:pos-margin,ass:beta-gradient}, $\hat{\beta}$ is proportional to a minimum of
\begin{equation}
    \|\beta\|_{\rm GLN},\quad
    \text{s.t. }y^{(n)}\langle\beta_{\gamma^{(n)}},x^{(n)}\rangle\geq1,
\end{equation}
where
\begin{equation}
    \|\beta\|_{\rm GLN}=\min_{\zeta}\sum_{h=1}^H\|\zeta_h\|_2,\quad
    \text{s.t. }\beta_{\gamma}=\sum_{h=1}^H\zeta_{h\gamma_h}.
    \tag{\ref{eq:beta-gln} restated}
\end{equation}
\end{proposition}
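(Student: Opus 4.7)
The plan is to reduce the nonconvex optimization of \cref{th} to an equivalent convex program in the auxiliary variable $\zeta$, then lift the stationarity guaranteed in $w$ to global optimality in $\beta$ via convexity. The first key step is the variational identity: for any $\beta$ in the feasible set, the minimum of $\|w\|_2^2$ over all $w$ with $\beta_\gamma = \sum_h w^{(2)}_h w^{(1)}_{h\gamma_h}$ equals $2\|\beta\|_{\rm GLN}$. Writing $\zeta_{hc} = w^{(2)}_h w^{(1)}_{hc}$ so that $\beta_\gamma = \sum_h \zeta_{h\gamma_h}$, AM-GM gives $\|w^{(1)}_h\|_2^2 + (w^{(2)}_h)^2 \geq 2|w^{(2)}_h|\cdot\|w^{(1)}_h\|_2 \geq 2\|\zeta_h\|_2$, with equality at the balance point $\|w^{(1)}_h\|_2 = |w^{(2)}_h| = \sqrt{\|\zeta_h\|_2}$. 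Summing over $h$ and minimizing over $\zeta$ consistent with $\beta$ yields the identity.

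Next, I verify that the stationary $\hat{w}$ from \cref{th} automatically satisfies the balance condition $\|\hat{w}^{(1)}_h\|_2 = |\hat{w}^{(2)}_h|$ and then translate stationarity into the convex $\zeta$-program. For the balance condition, the cleanest route is scale-invariance: the perturbation $(w^{(1)}_h, w^{(2)}_h) \mapsto ((1-\epsilon) w^{(1)}_h, (1+\epsilon) w^{(2)}_h)$ preserves $\beta$ to first order in $\epsilon$, so stationarity of $\|w\|_2^2$ along this direction forces $(w^{(2)}_h)^2 = \|w^{(1)}_h\|_2^2$. Equivalently, unpacking \cref{eq:th-stat} componentwise yields $\hat{w}^{(1)}_{hc} = \hat{w}^{(2)}_h u_{hc}$ with $u_{hc} := \sum_{n:\gamma^{(n)}_h = c} \lambda_n y_n x_n$, which when paired with the $w^{(2)}_h$ equation gives $\|u_h\|_2^2 = 1$, the same balance. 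Substituting the variational definition of $\|\cdot\|_{\rm GLN}$ converts the $\beta$-problem into the convex program $\min_\zeta \sum_h \|\zeta_h\|_2$ subject to $y^{(n)}\sum_h \langle\zeta_{h\gamma^{(n)}_h}, x^{(n)}\rangle \geq 1$. Setting $\hat{\zeta}_{hc} = \hat{w}^{(2)}_h \hat{w}^{(1)}_{hc} = (\hat{w}^{(2)}_h)^2 u_{hc}$ and using $\|\hat{\zeta}_h\|_2 = (\hat{w}^{(2)}_h)^2$ from the balance condition, one obtains $\hat{\zeta}_{hc}/\|\hat{\zeta}_h\|_2 = u_{hc} = \sum_{n:\gamma^{(n)}_h = c} \lambda_n y_n x_n$, which is exactly the subgradient stationarity condition for the $\zeta$-program with the $\lambda_n \geq 0$ from \cref{eq:th-stat} as Lagrange multipliers; primal feasibility and complementary slackness transfer directly.

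Finally, convexity of the $\zeta$-program (convex objective, affine constraints) means any KKT point is a global minimizer, so $\hat{\beta}$ (after the rescaling used in \cref{app:proof-th} to restore the margin constraint) is proportional to a global minimum of the GLN-norm problem. The main obstacle is the nonsmooth case in which $\hat{\zeta}_h = 0$ for some hidden unit: there the subgradient of $\|\zeta_h\|_2$ is the full unit ball, and one must verify that the inherited dual coefficient $u_h$ lies inside it. For units with both $\hat{w}^{(2)}_h \neq 0$ and $\hat{w}^{(1)}_h \neq 0$ this is automatic from the balance argument; degenerate ``dead'' units require either an approximation-by-smooth-regime argument or exclusion under the non-pathology assumptions of \cref{th}.
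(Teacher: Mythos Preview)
Your approach coincides with the paper's: both translate the KKT stationarity of $\hat w$ from \cref{th} into KKT conditions for the convex group-lasso program in $\zeta$ and then invoke convexity to upgrade to a global minimum. The paper's formal proof in \cref{app:special-case-norm} skips the AM--GM variational identity you open with (that identity appears only heuristically in the main text, \cref{sec:gln-norm}) and works directly at the level of the componentwise stationarity equations \cref{eq:kkt-w} and \cref{eq:kkt-zeta}; your scale-invariance perturbation for the balance $\|\hat w^{(1)}_h\|_2=|\hat w^{(2)}_h|$ is a clean alternative to the paper's direct substitution, but the two routes are otherwise parallel.

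On the degenerate case $\hat\zeta_h=0$ that you flag: the paper dispatches it in a single clause, observing that its stationarity equation \cref{eq:kkt-zeta}, namely $\zeta_{hc}=\|\zeta_h\|_2\sum_{\gamma:\gamma_h=c}\phi_\gamma$, is trivially satisfied as $0=0$ there. Your caution is well placed: for the nonsmooth group-lasso objective the complete KKT condition at a zero block is the subgradient inclusion $\|u_h\|_2\le1$, and the $w$-stationarity system \cref{eq:kkt-w} gives no information about $u_h$ once $\hat w^{(2)}_h=0$. The paper does not address this, so your proposal to handle dead units by a limiting or genericity argument actually goes beyond what the paper's own proof supplies.
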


\begin{proof}
Since $w$ converges in direction, $\zeta$ also converges in direction. We aim to prove that its limit direction, $\hat{\zeta}$, is proportional to a stationary point of
\begin{equation}
    \label{eq:opt-zeta}
    \sum_{h=1}^H\|\zeta_h\|_2,\quad\text{s.t. }y^{(n)}\left\langle\sum_{h=1}^H\zeta_{h\gamma_h},x^{(n)}\right\rangle\geq1
\end{equation}
Since (\ref{eq:opt-zeta}) is convex (because its objective function is convex and its constraints are linear), this will be sufficient to prove the proposition.

From \cref{th}, we know that $\hat{w}$ is proportional to a stationary point of (\ref{eq:th}) and can therefore be characterized as
\begin{align}
\begin{aligned}
\label{eq:kkt-w}
\hat{w}^{(1)}_{ch}=\hat{w}^{(2)}_h\sum_{\gamma:\gamma_h=c}\phi_{\gamma},\quad
\hat{w}^{(2)}_h=\sum_{c=1}^C\left\langle \hat{w}^{(1)}_{ch},\sum_{\gamma:\gamma_h=c}\phi_{\gamma}\right\rangle.
\end{aligned}
\end{align},

where $\phi_{\gamma}$ is the sum of support vectors for a given context:
\begin{equation}
    \phi_{\gamma}=\sum_{n\in S_{\gamma}}\lambda_nx^{(n)}y^{(n)}.
\end{equation}

$\hat{w}$ parameterizes $\hat{\zeta}$ as
\begin{equation}
    \hat{\zeta}_h=\hat{w}^{(2)}_h\hat{w}^{(1)}_{h}\in\mathbb{R}^{C\times D}
\end{equation}
A stationary point of (\ref{eq:opt-zeta}) is characterized by the equation
\begin{equation}
\label{eq:kkt-zeta}
    \zeta_{hc}=\|\zeta_h\|_2\sum_{\gamma:\gamma_h=c}\phi_{\gamma},
\end{equation}
so since the margin constraints are equivalent, we must only show that $\hat{\zeta}$ satisfies (\ref{eq:kkt-zeta}).

If $\hat{w}^{(2)}_h=0$, $\hat{\zeta_h}=0$ and (\ref{eq:kkt-zeta}) holds true.
We therefore assume that $\hat{w}^{(2)}_h\neq0$.
From (\ref{eq:kkt-w}), we can infer that

$$
\hat{w}_h^{(2)}=\hat{w}_h^{(2)}\sum_{c=1}^C\left\|\sum_{\gamma:\gamma_h=c}\phi_{\gamma}\right\|_2^2,
$$

which, since $\hat{w}_h^{(2)}\neq0$, implies that

\begin{equation}
\label{eq:support-id}
\sum_{c=1}^C\left\|\sum_{\gamma:\gamma_h=c}\phi_{\gamma}\right\|_2^2=1.
\end{equation}

That in turn implies that
\begin{align}
\begin{aligned}
\label{w-12-id}
\|\hat{w}^{(1)}_h\|_2^2=\sum_{c=1}^C\|\hat{w}^{(1)}_{hc}\|_2^2=
\left(\hat{w}^{(2)}_h\right)^2\sum_{c=1}^C\left\|\sum_{\gamma:\gamma_h=c}\phi_{\gamma}\right\|_2^2=\left(\hat{w}^{(2)}_h\right)^2,
\end{aligned}
\end{align}
and therefore
\begin{align*}
\hat{\zeta}_{hc}=\hat{w}^{(2)}_h\hat{w}^{(1)}_{hc}=\left(\hat{w}^{(2)}_h\right)^2\sum_{\gamma:\gamma_h=c}\phi_{\gamma}=
|\hat{w}_h^{(2)}|\|\hat{w}_h^{(1)}\|_2\sum_{\gamma:\gamma_h=c}\phi_{\gamma}=\|\hat{\zeta}_{h}\|_2\sum_{\gamma:\gamma_h=c}\phi_{\gamma},
\end{align*}
which proves (\ref{eq:kkt-zeta}).

To prove the reverse direction, we must show that if $\hat{\zeta}$ satisfies (\ref{eq:kkt-zeta}), we can find a $\hat{w}$ satisfying (\ref{eq:kkt-w}) such that $\hat{\zeta}_h=\hat{w}_h^{(2)}\hat{w}_h^{(1)}$.
If $\hat{\zeta}_h=0$, we define $\hat{w}_h^{(2)}=0$ and $\hat{w}_h^{(1)}=0$. We therefore assume $\hat{\zeta}_h\neq0$ and define
\begin{equation}
\label{zeta-to-w}
    \hat{w}^{(1)}_h:=\hat{\zeta}_h/\sqrt{\|\hat{\zeta}_h\|_2},\quad
    \hat{w}^{(2)}_h:=\sqrt{\|\hat{\zeta}_h\|_2}.
\end{equation}
Clearly this parameterizes $\zeta$ and so all that is left to show is that it satisfies (\ref{eq:kkt-w}).
(\ref{eq:kkt-zeta}) implies

\begin{align*}
\hat{w}_h^{(1)}=\hat{\zeta}_h/\sqrt{\|\hat{\zeta}_h\|_2}&=\sqrt{\|\hat{\zeta}_h\|_2}\sum_{\gamma:\gamma_h=c}\phi_{\gamma}=w^{(2)}_h\sum_{\gamma:\gamma_h=c}\phi_{\gamma}.
\end{align*}

Moreover, (\ref{eq:kkt-zeta}) immediately implies (\ref{eq:support-id}) and thus

\begin{align*}
\sum_{c=1}^C\left\langle \hat{w}^{(1)}_{hc},\sum_{\gamma:\gamma_h=c}\phi_{\gamma}\right\rangle&=
\frac{1}{\sqrt{\|\hat{\zeta}_h\|_2}}\sum_{c=1}^C\left\langle \hat{\zeta}_{hc},\sum_{\gamma:\gamma_h=c}\phi_{\gamma}\right\rangle=\\
\frac{1}{\sqrt{\|\hat{\zeta}_h\|_2}}\sum_{c=1}^C\left\langle \|\hat{\zeta}_{h}\|_2\sum_{\gamma:\gamma_h=c}\phi_{\gamma},\sum_{\gamma:\gamma_h=c}\phi_{\gamma}\right\rangle&=
\sqrt{\|\hat{\zeta}_h\|_2}\sum_{c=1}^C\left\|\sum_{\gamma:\gamma_h=c}\phi_{\gamma}\right\|_2^2
=\sqrt{\|\hat{\zeta}_h\|_2}=w^{(2)}_h,
\end{align*}

which proves (\ref{eq:kkt-w}) and thus the proposition.
\end{proof}
Note that we can prove the inductive bias of FreLUs (\ref{eq:frelu-norm}) in an analogous manner.

We now prove the following statement:

\begin{proposition}
For $C=H=2$, the norm is given by
\begin{align}
\begin{aligned}
\label{eq:beta-norm-3}
    \|\beta\|_{\rm GLN}^2=&\left(\|\beta_{11}-\beta_{12}\|_2+\|\beta_{11}-\beta_{21}\|_2\right)^2+\|\beta_{12}+\beta_{21}\|_2^2.
\end{aligned}
\end{align}
Equivalently,
\begin{equation}
    \|\beta\|_{\rm GLN}^2=\|\beta\|_2^2+\tfrac12\sum_{i,j}\|\beta_{ij}-\beta_{\overline{i}j}\|_2\|\beta_{ij}-\beta_{i\overline{j}}\|_2,
\tag{\ref{eq:beta-norm-2} restated}
\end{equation}
where $\overline{k}=2$ if $k=1$ and $\overline{k}=1$ if $k=2$.
\end{proposition}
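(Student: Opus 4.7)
The plan is to exploit the equivariance constraint to parameterize the feasible set of $\zeta$'s by a single free vector, reducing the minimization defining $\|\beta\|_{\rm GLN}$ to a one-dimensional convex problem solvable in closed form via Minkowski's inequality.

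First, I would solve the four linear constraints $\beta_{\gamma} = \zeta_{1\gamma_1} + \zeta_{2\gamma_2}$ in terms of the free vector $v := \zeta_{11} \in \mathbb{R}^D$; the equivariance identity $\beta_{21} - \beta_{11} = \beta_{22} - \beta_{12}$ from \cref{lem:arch-1} makes the system consistent, yielding $\zeta_{12} = v + (\beta_{21} - \beta_{11})$, $\zeta_{21} = \beta_{11} - v$, and $\zeta_{22} = \beta_{12} - v$.

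Next, I would complete the square in each $\|\zeta_h\|_2^2$. Writing $u_1 := v + (\beta_{21} - \beta_{11})/2$ and $u_2 := v - (\beta_{11} + \beta_{12})/2$ gives
\[
\|\zeta_1\|_2^2 = 2\|u_1\|^2 + \tfrac12\|\beta_{21}-\beta_{11}\|^2, \qquad
\|\zeta_2\|_2^2 = 2\|u_2\|^2 + \tfrac12\|\beta_{11}-\beta_{12}\|^2,
\]
with the displacement $d := u_1 - u_2 = (\beta_{12} + \beta_{21})/2$ fixed by $\beta$. Since each $\|\zeta_h\|_2$ is monotone in $\|u_h\|$ while $d$ is constant, the minimum is attained with $u_1, u_2 \in \operatorname{span}\{d\}$, so I may parameterize $u_h = s_h \hat d$ with $s_1 - s_2 = \|d\|$. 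I would then apply Minkowski's inequality in $\mathbb{R}^2$ to the auxiliary vectors $(\sqrt 2\, s_1,\, \|\beta_{21}-\beta_{11}\|/\sqrt 2)$ and $(-\sqrt 2\, s_2,\, \|\beta_{11}-\beta_{12}\|/\sqrt 2)$: the sum of their norms equals $\|\zeta_1\|_2 + \|\zeta_2\|_2$, while the norm of their sum is a closed-form expression reproducing (\ref{eq:beta-norm-3}); the bound is tight because the two auxiliary vectors can be made positive scalar multiples of each other.

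Finally, equivalence of (\ref{eq:beta-norm-3}) and (\ref{eq:beta-norm-2}) would follow by expanding (\ref{eq:beta-norm-3}) into inner products, substituting $\beta_{22} = \beta_{12} + \beta_{21} - \beta_{11}$ to reassemble $\|\beta\|_2^2$, and observing that the equivariance constraint forces each of the four summands of $\sum_{ij}\|\beta_{ij} - \beta_{\overline{i} j}\|\|\beta_{ij} - \beta_{i \overline{j}}\|$ to collapse to the common value $\|\beta_{11} - \beta_{12}\|\|\beta_{11} - \beta_{21}\|$. The main obstacle is the careful bookkeeping of factors of $\sqrt 2$ and $\tfrac12$ through the completion-of-square and Minkowski steps; the underlying geometric reduction is otherwise standard.
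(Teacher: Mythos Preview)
Your proposal is correct and reaches the same conclusion as the paper, but the route is genuinely different and somewhat cleaner. Both proofs begin identically, eliminating three of the four $\zeta$-blocks via the linear constraints so that only $v=\zeta_{11}$ remains free. From there the paper proceeds by two rounds of calculus: it sets the gradient in $v$ to zero to reduce the problem to a scalar mixing parameter $\alpha\in[0,1]$, then differentiates again in $\alpha$, solves the resulting quadratic, and substitutes back. Your approach replaces both differentiation steps with geometry: after completing the square you observe that the objective depends only on $\|u_1\|$ and $\|u_2\|$ with $u_1-u_2$ fixed, so the minimizer lies on $\operatorname{span}\{d\}$; this collapses the problem to one real degree of freedom, and a single application of Minkowski's inequality in $\mathbb{R}^2$ yields the closed form together with its attainability. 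The payoff is that you never have to solve for a critical point explicitly --- equality in Minkowski does that bookkeeping for you --- at the cost of having to verify that the equality case is compatible with the constraint $s_1-s_2=\|d\|$ (it is, with $s_1\ge 0\ge s_2$). For the equivalence of (\ref{eq:beta-norm-3}) and (\ref{eq:beta-norm-2}) your plan coincides with the paper's. One caution you already flag: the factors of $\tfrac12$ are delicate, and in fact the minimum of $\|\zeta_1\|_2+\|\zeta_2\|_2$ that both arguments produce equals $\sqrt{\tfrac12}$ times the right-hand side of (\ref{eq:beta-norm-3}); the paper handles this by inserting that factor explicitly, so be prepared to do the same.
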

\begin{proof}
We must minimize
\begin{equation}
    \min_{\zeta\in\mathbb{R}^{2\times 2\times D}}\|\zeta_1\|_2+\|\zeta_2\|_2,\quad
    \text{s.t. }\beta_{ij}=\zeta_{1i}+\zeta_{2j},
\end{equation}
for $i,j=1,2$.
For any $\zeta_{11}$, $\beta$ will uniquely determine $\zeta$ such that the constraint holds. More specifically,
\begin{align*}
    \zeta_{21}=\beta_{11}-\zeta_{11},\quad
    \zeta_{22}=\beta_{12}-\zeta_{11},\quad
    \zeta_{12}=\beta_{21}-\zeta_{21}=\beta_{21}-\beta_{11}+\zeta_{11}.
\end{align*}
Thus we must minimize the function
\begin{align}
\begin{aligned}
    g(\zeta_{11})&:=\|\zeta_1\|_2+\|\zeta_2\|_2=\sqrt{g_1(\zeta_{11})}+\sqrt{g_2(\zeta_{11})},\\
    g_1(\zeta_{11})&:=\|\zeta_{11}\|_2^2+\|\beta_{21}-\beta_{11}+\zeta_{11}\|_2^2,\\
    g_2(\zeta_{11})&:=\|\beta_{11}-\zeta_{11}\|_2^2+\|\beta_{12}-\zeta_{11}\|_2^2,
\end{aligned}
\end{align}
where we leave the dependence of $g$ on $\beta$ implicit to simplify the notation.
$g$ is convex and thus minimized by $\zeta_{11}$ if and only if
\begin{align*}
    0=\frac{\partial g}{\partial\zeta_{11}}(\zeta_{11}),
\end{align*}
which is equivalent to
\begin{align*}
    0
    =\sqrt{g_1(\zeta_{11})}\frac{\partial g_2}{\partial\zeta_{11}}(\zeta_{11})&+
    \sqrt{g_2(\zeta_{11})}\frac{\partial g_1}{\partial\zeta_{11}}(\zeta_{11})=\\
    \sqrt{g_1(\zeta_{11})}(2\zeta_{11}-\beta_{11}-\beta_{12})
    &+\sqrt{g_2(\zeta_{11})}(2\zeta_{11}+\beta_{21}-\beta_{11}).
\end{align*}
This means that
\begin{align*}
    2g(\zeta_{11})\zeta_{11}=\sqrt{g_1(\zeta_{11})}(\beta_{11}+\beta_{12})+\sqrt{g_2(\zeta_{11})}(\beta_{11}-\beta_{21}).
\end{align*}
Defining
\begin{equation}
    \alpha=\frac{\sqrt{g_1(\zeta_{11})}}{g(\zeta_{11})}\in[0,1],
\end{equation}
we can express
\begin{align*}
\zeta_{11}&=\tfrac12\left(\alpha(\beta_{12}+\beta_{11})+(1-\alpha)(\beta_{11}-\beta_{21})\right)=\tfrac12\left(\beta_{11}+\alpha\beta_{12}-(1-\alpha)\beta_{21}\right).
\end{align*}
We have therefore reduced our search space to $\alpha$, and reexpress the norm in terms of $\alpha$:
\begin{align}
\begin{aligned}
    \tilde{g}(\alpha):=&\sqrt{\tilde{g}_1(\alpha)}+\sqrt{\tilde{g}_2(\alpha)},\\
    \tilde{g}_1(\alpha):=&g_1(\zeta_{11})=\|\zeta_{11}\|_2^2+\|\beta_{21}-\beta_{11}+\zeta_{11}\|_2^2=\\
    &\tfrac14(\|\beta_{11}-\beta_{21}+\alpha(\beta_{12}+\beta_{21})\|_2^2+\|\beta_{21}-\beta_{11}+\alpha(\beta_{12}+\beta_{21})\|_2^2)=\\
    &\tfrac12(\|\beta_{11}-\beta_{21}\|_2^2+\alpha^2\|\beta_{12}+\beta_{21}\|_2^2),\\
    \tilde{g}_2(\alpha):=&g_2(\zeta_{11})=\|\beta_{11}-\zeta_{11}\|_2^2+\|\beta_{12}-\zeta_{11}\|_2^2=\\
    &\tfrac14(\|\beta_{11}+\beta_{21}-\alpha(\beta_{12}+\beta_{21}\|_2^2+
    \|(2-\alpha)(\beta_{12}+\beta_{21})-(\beta_{11}+\beta_{21})\|_2^2)=\\
    &\tfrac12(\|\beta_{11}+\beta_{21}\|_2^2+(1+(1-\alpha)^2)\|\beta_{12}+\beta_{21}\|_2^2-
    2\langle\beta_{12}+\beta_{21},\beta_{11}+\beta_{21}\rangle)=\\
    &\tfrac12(\|\beta_{11}-\beta_{12}\|_2^2+(1-\alpha)^2\|\beta_{12}+\beta_{21}\|_2^2).
\end{aligned}
\end{align}
We now wish to show that
\begin{equation}
\label{eq:beta-norm-3-zz}
\min_{\alpha}\tilde{g}(\alpha)=\sqrt{\tfrac12}\|\beta\|_{\rm GLN},
\end{equation}
according to the first definition, (\ref{eq:beta-norm-3}).
If $\beta_{12}+\beta_{21}=0$, this follows immediately.
We therefore assume that $\beta_{12}+\beta_{21}\neq0$.
Any minimal $\alpha$ must satisfy
\begin{equation}
\label{eq:stat-alpha}
    0=\frac{\partial\tilde{g}(\alpha)}{\partial\alpha}=\frac{\alpha\|\beta_{12}+\beta_{21}\|_2^2}{\sqrt{\tilde{g}_1(\alpha)}}-\frac{(1-\alpha)\|\beta_{12}+\beta_{21}\|_2^2}{\sqrt{\tilde{g}_2(\alpha)}}.
\end{equation}
Since $\alpha\in[0,1]$, both $\alpha$ and $1-\alpha$ are nonnegative, and (\ref{eq:stat-alpha}) is equivalent to
\begin{equation}
    \tilde{g}_1(\alpha)(1-\alpha)^2=\tilde{g}_2(\alpha)\alpha^2,
\end{equation}
which, in turn, reduces to
\begin{equation}
    \alpha^2\|\beta_{11}-\beta_{12}\|_2^2=(1-\alpha)^2\|\beta_{11}-\beta_{21}\|_2^2.
\end{equation}
Taking the square root and rearranging results in
\begin{equation}
    \alpha=\frac{\|\beta_{11}-\beta_{21}\|_2}{\|\beta_{11}-\beta_{12}\|_2+\|\beta_{11}-\beta_{21}\|_2}.
\end{equation}
This implies
\begin{align*}
\tilde{g}_1(\alpha)=\tfrac12\|\beta_{11}-\beta_{21}\|_2^2\left(1+\frac{\|\beta_{12}+\beta_{21}\|_2^2}{(\|\beta_{11}-\beta_{12}\|_2+\|\beta_{11}-\beta_{21}\|_2)^2}\right).
\end{align*}
Since
$$
1-\alpha=\frac{\|\beta_{11}-\beta_{12}\|_2}{\|\beta_{11}-\beta_{12}\|_2+\|\beta_{11}-\beta_{21}\|_2},
$$
we can analogously infer that
\begin{align*}
\tilde{g}_2(\alpha)=\tfrac12\|\beta_{11}-\beta_{12}\|_2^2\left(1+\frac{\|\beta_{12}+\beta_{21}\|_2^2}{(\|\beta_{11}-\beta_{12}\|_2+\|\beta_{11}-\beta_{21}\|_2)^2}\right).
\end{align*}
Finally, these identities imply that
\begin{align*}
    g_3(\alpha^{\ast})=
    &\sqrt{\tfrac12}(\|\beta_{11}-\beta_{12}\|_2+\|\beta_{11}-\beta_{21}\|_2)
    \sqrt{1+\frac{\|\beta_{12}+\beta_{21}\|_2^2}{(\|\beta_{11}-\beta_{12}\|_2+\|\beta_{11}-\beta_{21}\|_2)^2}}=\\
    &\sqrt{\tfrac12}\sqrt{(\|\beta_{11}-\beta_{12}\|_2+\|\beta_{11}-\beta_{21}\|_2)^2+\|\beta_{12}+\beta_{21}\|_2^2},
\end{align*}
which proves (\ref{eq:beta-norm-3-zz}) and thus (\ref{eq:beta-norm-3}).

All that is left to show is that (\ref{eq:beta-norm-3}) is equivalent to (\ref{eq:beta-norm-2}).
To do so, we note that
\begin{equation}
    \beta_{22}-\beta_{21}=\beta_{12}-\beta_{11}.
    \tag{\ref{eq:beta-id} recalled}
\end{equation}
Therefore
\begin{align}
    \begin{split}
        \|\beta\|_2^2=&\|\beta_{11}\|_2^2+\|\beta_{12}\|_2^2+\|\beta_{21}\|_2^2+\|\beta_{12}+\beta_{21}-\beta_{11}\|_2^2=\\
        &2(\|\beta_{11}\|_2^2+\|\beta_{12}\|_2^2+\|\beta_{21}\|_2^2+
        \langle\beta_{12},\beta_{21}\rangle-\langle\beta_{11},\beta_{12}\rangle-\langle\beta_{11},\beta_{21}\rangle
        ),
    \end{split}
\end{align}
and
\begin{align}
    \begin{split}
        &\|\beta_{11}-\beta_{12}\|_2^2+\|\beta_{11}-\beta_{21}\|_2^2+\|\beta_{12}+\beta_{21}\|_2^2=\\
        &2(\|\beta_{11}\|_2^2+\|\beta_{12}\|_2^2+\|\beta_{21}\|_2^2+
        \langle\beta_{12},\beta_{21}\rangle-\langle\beta_{11},\beta_{12}\rangle-\langle\beta_{11},\beta_{21}\rangle
        )=\|\beta\|_2^2.
    \end{split}
\end{align}
This, in turn implies that
\begin{align}
\begin{split}
        \|\beta\|^2=\|\beta\|_2^2+2\|\beta_{11}-\beta_{12}\|_2\|\beta_{11}-\beta_{21}\|_2
        =\|\beta\|_2^2+\tfrac12\sum_{i,j}\|\beta_{ij}-\beta_{\overline{i}j}\|_2\|\beta_{ij}-\beta_{i\overline{j}}\|_2,
\end{split}
\end{align}
where the latter equality follows from the fact that for all $i,j$
\begin{equation}
    \|\beta_{ij}-\beta_{\overline{i}j}\|_2\|\beta_{ij}-\beta_{i\overline{j}}\|_2=\|\beta_{11}-\beta_{12}\|_2\|\beta_{11}-\beta_{21}\|_2,
\end{equation}
as a result of (\ref{eq:beta-id}).
\end{proof}

\subsection{Extending \citet{gunasekar_characterizing_2018} to exponential-like losses}
\label{app:extension-gunasekar}

We here consider the same set of loss functions as \citet{soudry_implicit_2018} and \citet{lyu_gradient_2020}.
We call this class \emph{exponential-like losses}.

\begin{definition}
\label{def:exponential-like}
We call $\ell:\mathbb{R}\to\mathbb{R}$ \emph{exponential-like} if and only if the function satisfies the following assumptions:

\begin{enumerate}
    \item $\ell$ is monotonically decreasing, differentiable, and $\lim_{u\to\infty}\ell(u)\to0$.
    \item $\ell'$ is Lipschitz continuous.
    \item $-\ell'$ has a tight exponential tail, i.e. there exist positive constants $c$, $a$, $\mu_{+}$, $\mu_{-}$, $u_{+}$, $u_{-}$ such that
    \begin{align}
        \forall_{u>u_{+}}-\ell'(u)&\leq c(1+\exp(-\mu_{+}u))\exp(-au),\\
        \forall_{u>u_{-}}-\ell'(u)&\geq c(1-\exp(-\mu_{-}u))\exp(-au).
    \end{align}
\end{enumerate}
\end{definition}

\citet{gunasekar_characterizing_2018,gunasekar_implicit_2018} only consider the exponential loss, but note that they expect their results to generalize towards exponential-like losses.
More specifically, they prove Lemma 8 in \citet{gunasekar_characterizing_2018} only for the exponential loss.
We here close the small gap their results leave by extending this lemma to exponential-like losses.

\begin{lemma}
\label{lem:gunasekar}
For almost all linearly separable datasets $\mathcal{D}=\left((x^{(n)},y^{(n)})\right)_{n=1,\dotsc,N}$, consider the loss function
\begin{equation}
    \mathcal{L}(\beta):=\sum_{n=1}^N\ell(y^{(n)}\langle\beta,x^{(n)}\rangle),
\end{equation}
where $\ell$ is an exponential-like loss.

Any sequence $\beta^{(t)}$ such that
\begin{align}
    \mathcal{L}(\beta^{(t)})&\to 0,
    \label{eq:gun-conda-1}\\
    \beta^{(t)}/\|\beta^{(t)}\|_2&\to\hat{\beta},
    \label{eq:gun-conda-2}\\
    -\nabla_{\beta}\mathcal{L}(\beta^{(t)})/\|\nabla_{\beta}\mathcal{L}(\beta^{(t)})\|_2&\to \hat{z},
    \label{eq:gun-conda-3}
\end{align}
for some $\hat{\beta}$, $\hat{z}$. Let
\begin{equation}
    S_{\mathcal{D}}:=\left\{n|y^{(n)}\langle\hat{\beta},x^{(n)}\rangle=\min_ny^{(n)}\langle\hat{\beta},x^{(n)}\rangle\right\}
\end{equation}
be the \emph{support}. Then there exists a sequence of nonnegative numbers $(\alpha_n)_{n\in S_{\mathcal{D}}}$, $\alpha_n\geq 0$ such that

\begin{equation}
    \hat{z}=\sum_{n\in S_{\mathcal{D}}}\alpha_ny_nx_n.
\end{equation}
\end{lemma}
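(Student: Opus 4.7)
The plan is to follow the same strategy as Gunasekar et al.\ for the exponential loss, but replace the exact exponential identity with upper/lower bounds coming from the tight exponential tail condition (item~3 of \cref{def:exponential-like}). Write the loss gradient explicitly as
\begin{equation}
-\nabla_\beta\mathcal{L}(\beta^{(t)})=\sum_{n=1}^N w_n^{(t)}\,y^{(n)}x^{(n)},\quad w_n^{(t)}:=-\ell'(u_n^{(t)}),\quad u_n^{(t)}:=y^{(n)}\langle\beta^{(t)},x^{(n)}\rangle.
\end{equation}
Because $\ell$ is decreasing, $w_n^{(t)}\geq 0$ automatically, so the gradient is already a nonnegative combination of $y^{(n)}x^{(n)}$. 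The goal is therefore to show that in the normalized limit only indices $n\in S_{\mathcal{D}}$ contribute.

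First I would collect standard preliminaries. Since $\mathcal{L}(\beta^{(t)})\to 0$ and each $\ell(u_n^{(t)})\geq 0$, we have $\ell(u_n^{(t)})\to 0$, and by monotonicity $u_n^{(t)}\to\infty$ for all $n$. Combined with $\beta^{(t)}/\|\beta^{(t)}\|_2\to\hat\beta$ this forces $\|\beta^{(t)}\|_2\to\infty$ and $y^{(n)}\langle\hat\beta,x^{(n)}\rangle\geq 0$; by the genericity ``almost all'' assumption we may take these to be strictly positive, so $m:=\min_n y^{(n)}\langle\hat\beta,x^{(n)}\rangle>0$ and $S_{\mathcal{D}}$ is well-defined.

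The main technical step is to bound $w_n^{(t)}/w_{n_0}^{(t)}$ for a fixed support vector $n_0\in S_{\mathcal{D}}$ and any $n\notin S_{\mathcal{D}}$. Applying the upper tail bound in the numerator and the lower tail bound in the denominator gives
\begin{equation}
\frac{w_n^{(t)}}{w_{n_0}^{(t)}}\;\leq\;\frac{1+\exp(-\mu_+u_n^{(t)})}{1-\exp(-\mu_-u_{n_0}^{(t)})}\,\exp\!\bigl(-a(u_n^{(t)}-u_{n_0}^{(t)})\bigr).
\end{equation}
The prefactor tends to $1$ since $u_n^{(t)},u_{n_0}^{(t)}\to\infty$. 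For the exponential factor, rewrite $u_n^{(t)}-u_{n_0}^{(t)}=\|\beta^{(t)}\|_2\bigl(y^{(n)}\langle\hat\beta^{(t)},x^{(n)}\rangle-y^{(n_0)}\langle\hat\beta^{(t)},x^{(n_0)}\rangle\bigr)$, where $\hat\beta^{(t)}:=\beta^{(t)}/\|\beta^{(t)}\|_2\to\hat\beta$. Because $n\notin S_{\mathcal{D}}$ the bracket converges to a strictly positive constant $\delta_n:=y^{(n)}\langle\hat\beta,x^{(n)}\rangle-m>0$, so for large $t$ it is at least $\delta_n/2$. Combined with $\|\beta^{(t)}\|_2\to\infty$, this drives $w_n^{(t)}/w_{n_0}^{(t)}\to 0$ exponentially fast. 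This is the heart of the generalization: for the exponential loss the ratio is exactly $\exp(-(u_n^{(t)}-u_{n_0}^{(t)}))$, and the tight-tail hypothesis is precisely what is needed to reproduce the same decay up to a prefactor that converges to $1$.

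To finish, let $W_S^{(t)}:=\sum_{n'\in S_{\mathcal{D}}}w_{n'}^{(t)}$ and normalize the gradient by $W_S^{(t)}$ in both numerator and denominator. The coefficients $w_n^{(t)}/W_S^{(t)}$ for $n\in S_{\mathcal{D}}$ lie in $[0,1]$ and sum to $1$, so Bolzano--Weierstrass yields a subsequence along which they converge to some $\tilde\alpha_n\geq 0$ with $\sum_{n\in S_{\mathcal{D}}}\tilde\alpha_n=1$; the previous paragraph shows the remaining terms vanish along the same subsequence. Passing to the limit in the normalized gradient then gives
\begin{equation}
\hat z=\frac{\sum_{n\in S_{\mathcal{D}}}\tilde\alpha_n y^{(n)}x^{(n)}}{\bigl\|\sum_{n\in S_{\mathcal{D}}}\tilde\alpha_n y^{(n)}x^{(n)}\bigr\|_2},
\end{equation}
so setting $\alpha_n=\tilde\alpha_n/\bigl\|\sum_{n'}\tilde\alpha_{n'}y^{(n')}x^{(n')}\bigr\|_2$ proves the lemma. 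The one remaining worry is the denominator vanishing; this is exactly the degenerate configuration excluded by the ``almost all linearly separable datasets'' clause, so I expect no further work beyond citing genericity. The main obstacle is therefore the careful tail-bound manipulation above; once that ratio estimate is in hand, the rest is compactness plus bookkeeping.
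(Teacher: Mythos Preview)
Your argument is correct, but it takes a different route from the paper. The paper does not bound the weight ratios $w_n^{(t)}/w_{n_0}^{(t)}$ directly. Instead it writes $-\ell'(u)=c\,e^{-au}\bigl(1+\delta(u)\bigr)$ with $|\delta(u)|\leq\max(e^{-\mu_+u},e^{-\mu_-u})\to 0$, uses this to show that the full gradient equals the gradient of the \emph{exponential} loss $\tilde{\mathcal L}(\beta)=\sum_n c\,e^{-a\,y^{(n)}\langle\beta,x^{(n)}\rangle}$ multiplied by a factor $1+o(1)$, concludes that the two normalized gradients share the same limit $\hat z$, and then simply invokes Lemma~8 of \citet{gunasekar_characterizing_2018} for the pure exponential case. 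In other words, the paper reduces the exponential-like case to the already-proved exponential case rather than redoing the support-vector argument.

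Your approach instead absorbs the tail bounds into the ratio estimate and reproves the exponential-case lemma on the fly. This is slightly longer but more self-contained: you never need to cite the external lemma, and your ratio bound makes explicit the mechanism (exponential separation of support vs.\ non-support weights) that the paper hides behind the citation. The paper's reduction is shorter and more modular once Lemma~8 is taken as given. Both arguments lean on the ``almost all datasets'' clause at the same place, namely to rule out the degenerate event that the limiting nonnegative combination of signed support vectors vanishes; your flagging of that point is appropriate and matches what the cited lemma needs as well.
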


\begin{proof}
To prove the lemma, we reduce the general, exponential-like case to the special case where $\ell$ is the exponential loss.

The gradient is given by
\begin{equation}
    -\nabla_{\beta}\mathcal{L}(\beta)=\sum_{n=1}^N-\ell'(y^{(n)}\langle\beta,x^{(n)}\rangle)y^{(n)}x^{(n)}.
\end{equation}
We now decompose this gradient into the special case where $\ell$ is the exponential loss, and the deviation of $\ell$ from the exponential loss.
We write
\begin{equation}
    -\nabla_{\beta}\tilde{\mathcal{L}}(\beta)=\sum_{n=1}^N-c\exp(-ay^{(n)}\langle\beta,x^{(n)}\rangle)y^{(n)}x^{(n)},
\end{equation}
where $c$, $a$ are the constants from the definition of $\ell$.
(Put differently, $\tilde{\mathcal{L}}$ defines the dataset's loss if we were using the exponential loss.)

We know that
\begin{equation}
    m_n^{(t)}:=y^{(n)}\langle\beta^{(t)},x^{(n)}\rangle,
\end{equation}
diverges.
Thus we can assume a large enough $t$ such that $y^{(n)}\langle\beta^{(t)},x^{(n)}\rangle>u_+,u_-$ for all $n$.
We can now write
\begin{equation}
    -\ell'(m_n^{(t)})=c\exp(-am_n^{(t)})+c\exp(-am_n^{(t)})\delta_n^{(t)},
\end{equation}
where, due to the fact that $-\ell'$ has a tight exponential tail, we are guaranteed that
\begin{equation}
    -\exp(-\mu_-m_n^{(t)})\leq\delta_n^{(t)}\leq\exp(-\mu_+m_n^{(t)}),
\end{equation}
for large enough $t$.
We have thus sucessfully decomposed the gradient:
\begin{equation}
    -\nabla_{\beta}\mathcal{L}(\beta^{(t)})=-\nabla_{\beta}\tilde{\mathcal{L}}(\beta^{(t)})+c\sum_{n=1}^N\exp(-am_n^{(t)})\delta_n^{(t)}y^{(n)}x^{(n)}.
\end{equation}
We now want to use this decomposition to prove that
\begin{equation}
    -\nabla_{\beta}\tilde{\mathcal{L}}(\beta^{(t)})/\|\nabla_{\beta}\tilde{\mathcal{L}}(\beta^{(t)})\|_2\to \hat{z},
\end{equation}
as well.
We know that
\begin{equation}
    \delta^{(t)}:=\frac{\sum_{n=1}^N\exp(-am_n^{(t)})\delta_n^{(t)}y^{(n)}x^{(n)}}
    {\exp(-am_n^{(t)})y^{(n)}x^{(n)}}
\end{equation}
is bounded by $\tilde{\delta}^{(t)}:=\max_n\delta_n^{(t)}$, and therefore $\delta^{(t)}\to0$.
This, in turn, implies that
\begin{equation}
    (1+\delta^{(t)})/\|1+\delta^{(t)}\|_2\to1.
\end{equation}
We can rewrite
\begin{equation}
    -\nabla_{\beta}\mathcal{L}(\beta^{(t)})=-\nabla_{\beta}\tilde{\mathcal{L}}(\beta^{(t)})(1+\delta^{(t)}),
\end{equation}
and therefore infer
\begin{align}
\begin{split}
    \hat{z}
    =\lim_{t\to\infty}\frac{-\nabla_{\beta}\mathcal{L}(\beta^{(t)})}{\|\nabla_{\beta}\mathcal{L}(\beta^{(t)})\|}
    =\lim_{t\to\infty}\frac{-\nabla_{\beta}\tilde{\mathcal{L}}(\beta^{(t)})}{\|\nabla_{\beta}\tilde{\mathcal{L}}(\beta^{(t)})\|}\frac{1+\delta^{(t)}}{\|1+\delta^{(t)}\|}
    =\lim_{t\to\infty}\frac{-\nabla_{\beta}\tilde{\mathcal{L}}(\beta^{(t)})}{\|\nabla_{\beta}\tilde{\mathcal{L}}(\beta^{(t)})\|}
\end{split}
\end{align}
Clearly, $\tilde{\mathcal{L}}(\beta^{(t)})\to0$, which allows us to apply Lemma 8 from \citet{gunasekar_characterizing_2018} and proves the more general lemma.
\end{proof}

\section{Extended Data}

For clarity's sake, we only showed a subset of the data in \cref{fig:exp-gln,fig:exp-relu}. \cref{fig:supp-gln,fig:supp-relu} depict the full data from our experiments on GLNs and ReLU networks, respectively.

\subsection{Gated Linear Networks}
\label{app:ext-gln}

\begin{figure*}[t]
\vskip 0.2in
\begin{center}
\centerline{\includegraphics[width=\textwidth]{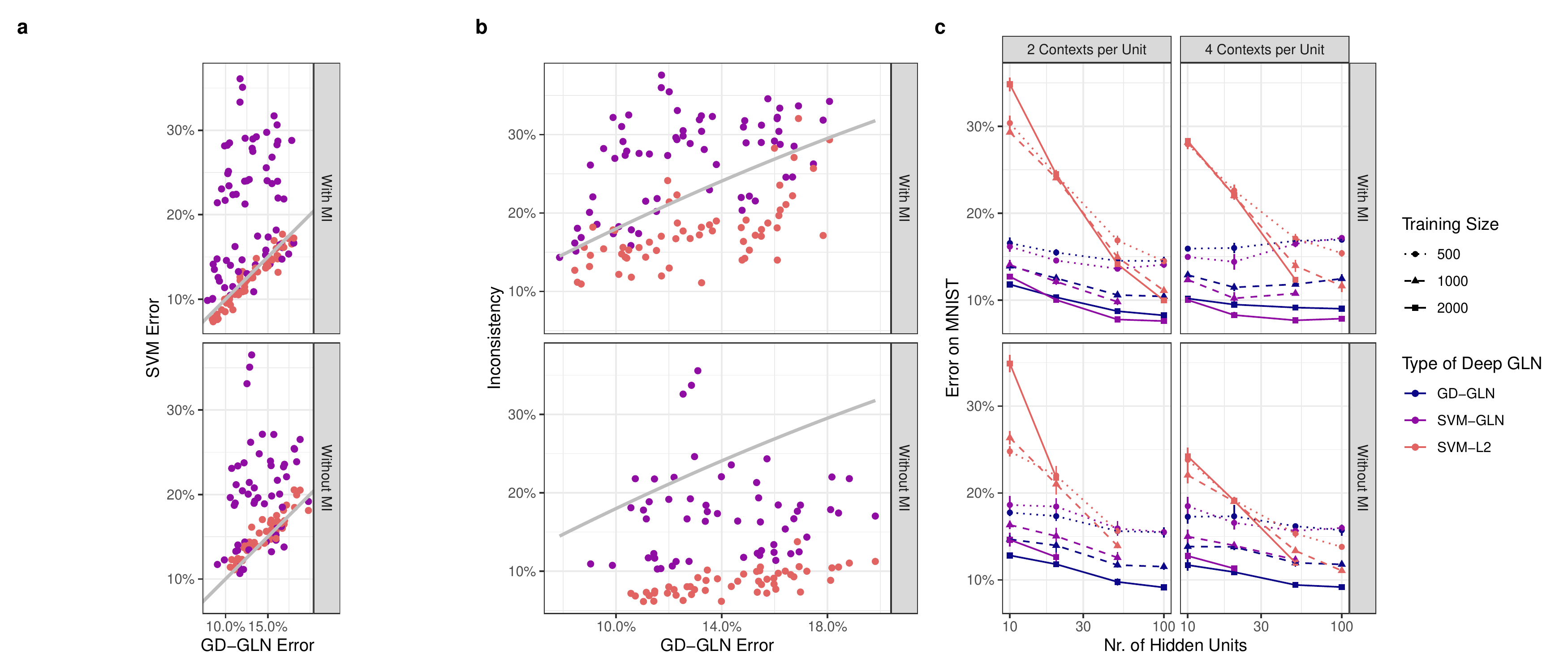}}
\caption{Full data from our experiments on GLNs. \textbf{a} The error of the GD-GLN plotted against the error of the SVMs. The grey line represents identical performance. The two panels correspond to the network that used median initialization ("With MI") and those that did not ("Without MI"). \textbf{b} Inconsistency between the GD-GLN and the SVM plotted against the GD-GLN's error on MNIST. The grey line represents the inconsistency we would expect from a predictor with matched error rate, but no further correlation with the network (see \cref{fn:inconsistency}). \textbf{c} Error of the SVMs and the GD-GLN plotted against the number of hidden units.}
\label{fig:supp-gln}
\end{center}
\vskip -0.2in
\end{figure*}

\begin{figure*}
\vskip 0.2in
\begin{center}
\centerline{\includegraphics[width=\textwidth]{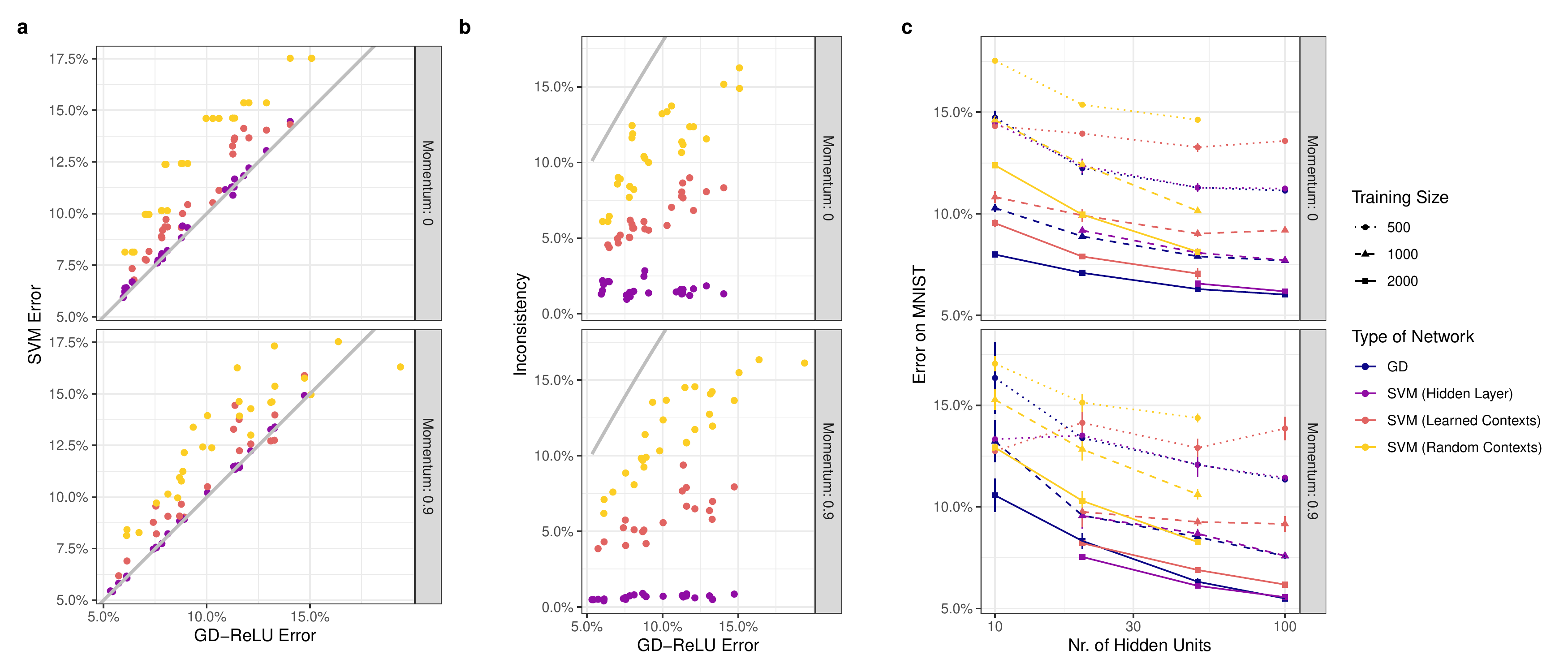}}
\caption{Full data from our experiments on ReLU networks. \textbf{a} The error of the GD-ReLU plotted against the SVMs. The grey line represents identical performance. The two panels correspond to training with and without momentum. \textbf{b} Inconsistency between the GD-ReLU and the SVMs plotted against the GD-ReLU's error on MNIST. The grey line represents the inconsistency we would expect from a predictor with matched error rate, but no further correlation with the network (see \cref{fn:inconsistency}). \textbf{c} Error of the SVMs and the GD-ReLU plotted against the number of hidden units.}
\label{fig:supp-relu}
\end{center}
\vskip -0.2in
\end{figure*}

Since \citet{veness_online_2017} did not use median initialization (MI), we trained the GLNs without MI as well.
This resulted in slightly worse performance, but the SVM-GLN was still more consistent with the GD-GLN than the SVM-L2 (\cref{fig:supp-gln}a,b).

In addition, \cref{fig:supp-gln}c, unlike \cref{fig:exp-gln}d, also depicts the networks with four contexts per unit.
Remarkably, for 100 hidden units, the SVM-L2 is beginning to outperform the SVM-GLN.
Other than that, the interpretation of the data remained unaffected.
Without MI, the data was qualitatively similar, as well, except that the GD-GLN tended to slightly outperform the SVM-GLN.

\subsection{Frozen-Gate ReLU Networks}

In \cref{fig:exp-relu}, we depicted the networks trained with a momentum of 0.9.
Since our theorem technically only holds for gradient descent without momentum (though \citet{soudry_implicit_2018} saw qualitatively similar behavior with momentum, as well), we additionally trained networks without momentum.
Since they do not rely on the ReLU networks at all, this did not change the SVM-RC.
It changed the SVM-LC and SVM-HL only insofar as the model from which they used the contexts and hidden layer, respectively, changed.
The interpretation of the data remained unaffected.
Most notably, both performance and consistency with the SVMs were a bit worse.
This is consistent with the interpretation that momentum speeds up training to a limit that is, in part, characterized by the SVMs, and that moving closer to this limit improves performance.

\end{document}